\documentclass{article}
\usepackage{geometry}
\usepackage[utf8]{inputenc}
\usepackage{amsmath}
\usepackage{amsthm}
\usepackage{amsfonts} 
\usepackage{bm}
\usepackage{graphicx}
\usepackage{hyperref}
\usepackage{algorithm}
\usepackage{algpseudocode}
\usepackage[numbers]{natbib}
\usepackage{booktabs}
\usepackage{xcolor}
\usepackage{algorithm}
\usepackage{algpseudocode}
\usepackage{hyperref}
\usepackage{dsfont} 
\newtheorem{theorem}{Theorem}
\newtheorem{lemma}{Lemma}
\newtheorem{proposition}{Proposition}

\newcommand{\R}{\mathbb{R}}
\DeclareMathOperator{\E}{\mathbb{E}}

\title{PurSAMERE: Reliable Adversarial Purification via Sharpness-Aware Minimization of Expected \\ Reconstruction Error}
\author{Vinh Hoang, Sebastian Krumscheid, Holger Rauhut, Raúl Tempone}

\author{
	Vinh Hoang$^{1, 2}$,
	Sebastian Krumscheid$^{3}$,
	Holger Rauhut$^{4}$,
	Raúl Tempone$^{5}$
	\\[1ex]
	{\small $^{1}$Department of Mathematics, RWTH-Aachen University, Germany} \\
	{\small $^{2}$Institute for a sustainable Hydrogen Economy (IHE),} \\ {\small Forschungszentrum Jülich, Germany} \\
	{\small $^3$Scientific Computing Center and Institute for Applied and}  \\ 
	{\small Numerical Mathematics, Karlsruhe Institute of Technology, Germany} \\
	{\small $^4$Department of Mathematics \& Munich Center for Machine Learning,} \\ 
	{\small Ludwig Maximilian University of Munich, Germany} \\
	{\small $^5$Computer, Electrical and Mathematical Sciences and Engineering,} \\
	{\small KAUST, Saudi Arabia} \\[1ex]
	{\small \texttt{vi.hoang@fz-juelich.de}, 
		\texttt{sebastian.krumscheid@kit.de},} \\
	{\small \texttt{rauhut@math.lmu.de}, \texttt{raul.tempone@kaust.edu.sa}}
}

\begin{document}
	
	\maketitle
	
	\begin{abstract}
		We propose a novel deterministic purification method to improve adversarial robustness by mapping a potentially adversarial sample toward a nearby sample that lies close to a mode of the data distribution, where classifiers are more reliable. We design the method to be deterministic to ensure reliable test accuracy and to prevent the degradation of effective robustness observed in stochastic purification approaches when the adversary has full knowledge of the system and its randomness. 
		We employ a score model trained by minimizing the expected reconstruction error of noise-corrupted data, thereby learning the structural characteristics of the input data distribution. Given a potentially adversarial input, the method searches within its local neighborhood for a purified sample that minimizes the expected reconstruction error under noise corruption and then feeds this purified sample to the classifier. 
		During purification, sharpness-aware minimization is used to guide the purified samples toward flat regions of the expected reconstruction error landscape, thereby enhancing robustness. 
		We further show that, as the noise level decreases, minimizing the expected reconstruction error biases the purified sample toward local maximizers of the Gaussian-smoothed density; under additional local assumptions on the score model, we prove recovery of a local maximizer in the small-noise limit.
		Experimental results demonstrate significant gains in adversarial robustness over state-of-the-art methods under strong deterministic white-box attacks. 
	\end{abstract}
	
	
	\section{Introduction}

	Neural networks have achieved remarkable success in various domains, including computer vision, natural language processing, and reinforcement learning.
	However, they are known to be vulnerable to adversarial attacks, where small perturbations to the input data can lead to incorrect predictions~\cite{szegedy2013intriguing, goodfellowSS14}, posing a challenge for evaluating the reliability of neural networks in real-world applications. 
	To address this issue, there are two common approaches to enhancing the adversarial robustness of neural networks: adversarial training and adversarial purification. 
	Adversarial training involves augmenting the training dataset with adversarial examples, which are generated by perturbing the original data~\cite{madry2018towards}.
	Adversarial purification aims to remove adversarial perturbations from the input data before feeding it to the classifier. 
	Adversarial purification can be classified into two main categories: manifold-based methods and density-based methods.
	Manifold-based purification methods assume that clean data lie on a low-dimensional manifold embedded in the high-dimensional input space. The purification process involves projecting the adversarial sample onto this manifold to recover the clean data. Samangouei et al.~\cite{samangouei2018defensegan} proposed Defense-GAN, which uses generative adversarial networks (GANs)~\cite{goodfellow2014generative} to learn the data manifold and purify adversarial samples by finding the closest point on the GAN manifold to the adversarial sample. Shi et al.~\cite{shi2021online} introduced an online adversarial purification method based on autoencoders, which reconstructs clean data from adversarial samples by minimizing the reconstruction error.
	Density-based purification methods aim to move adversarial samples toward regions of high data density, where the classifier is expected to be more robust. Song et al.~\cite{song2018pixeldefend} proposed using generative models with tractable likelihood~\cite{van2016pixel,salimans2017pixelcnn} to purify adversarial samples by moving them toward the local maximum of the data density.
	In~\cite{pmlr-v139-yoon21a}, Yoon et al.~used score-based generative models to purify adversarial samples using simplified Langevin dynamics with random starting points. 
	More recently, the diffusion-based purification approach, which uses diffusion models to denoise adversarial samples, was introduced in~\cite{pmlr-v162-nie22a} and further improved in~\cite{li2025adbm}.
	
	Adversarial purification may introduce gradient obfuscation, which is even more pronounced for methods that rely heavily on stochasticity. Athalye et al.~\cite{pmlr-v80-athalye18a} demonstrated that gradient obfuscation can lead to unreliable test accuracies. The aim of this paper is to develop a deterministic purification method that not only enhances adversarial robustness but also provides reliable robustness under strong white-box adversarial attacks. 
	
	The adaptive attack is an adversarial attack specifically designed for the purification-based defense to address the gradient obfuscation~\cite{pmlr-v80-athalye18a}. For the purification methods that involve stochasticity, the Expectation over Transformation (EoT) technique is combined with the adaptive attack to improve its effectiveness. An even stronger adversarial attack is the white-box deterministic adaptive attack where the adversary has full knowledge of not only the classifier and the purification model but also the random generators. Li et al.~\cite{li2025adbm} showed that under strong adaptive attacks with EoT, the adversarial accuracy of stochastic purification methods significantly decreases. Moreover, Liu et al.~\cite{liu2025towards} conducted numerical tests with the deterministic white-box attacks and showed that stochastic purification methods may lose most of their adversarial accuracy. These results suggest that a purification method can be deemed effective only if it maintains robustness under both strong adaptive attacks and deterministic white-box attacks.
	
	In this paper, we introduce a novel deterministic purification approach named Purification via Sharpness-Aware Minimization of the Expected Reconstruction Error (PurSAMERE). 
	Our method purifies an adversarial sample by finding a nearby sample that minimizes the expected reconstruction error of the noise-corrupted data. This error is evaluated using a score-based generative model. The proposed purification method favors purified samples that lie in regions where the data density is concave. We show that, under certain assumptions, the purified sample converges to a local maximum of the data density function in the asymptotic limit as the noise covariance approaches zero. The intuition behind our method is that the classifier tends to be more accurate for inputs where the density function is high, and the local density maximum is stable with respect to (w.r.t.) the perturbation of the original input. Additionally, we use sharpness-aware minimization (SAM)~\cite{foret2021sharpnessaware} during the purification process to enhance the robustness of the purified samples by driving them toward regions of flat expected reconstruction error. 
	
	Since the expected reconstruction error is empirically estimated via a Monte Carlo method, a small degree of randomness is introduced. In all experiments, we fix the Monte Carlo noise samples, making the purification process completely deterministic. Our deterministic white-box evaluation assumes that the attacker has full knowledge of this fixed randomness.
	
	In contrast to prior methods that focus solely on manifold consistency or
	solely on density maximization, PurSAMERE adopts a \emph{hybrid} approach that combines both objectives: it drives the purified sample toward local maxima of the data density function while simultaneously ensuring that the purified sample lies as close as possible to the manifold implicitly defined by the expected reconstruction error. Our method has three main advantages:
	
	\begin{itemize}
		\item First, it is deterministic by design; therefore its test-time robustness accuracy does not degrade under deterministic white-box attacks and hence is more \emph{reliable}. Moreover, EoT is not needed for evaluating the robustness of deterministic purification methods, which reduces the computational cost for robustness evaluation.  
		\item Second, by minimizing the expected reconstruction error, our method drives purified samples toward \emph{high-density regions} of a Gaussian-smoothed distribution, and—under additional local structural assumptions—admits a recovery result for a local maximizer of a model potential.
		\item Third, by employing SAM during the purification process, our method effectively \emph{mitigates the impact of approximation errors} in the learned score function, which can introduce spurious local minima in the expected reconstruction error landscape. This enhances the reliability of the purification process and contributes to improved adversarial robustness.
	\end{itemize}
	
	\section{Related works}
	
	\paragraph{Generative models via conditional expectation} Denoising autoencoders (DAEs) were originally proposed as a representation learning approach that learns to recover clean data from noisy observations, thereby encouraging robustness to perturbations~\cite{vincent2008extracting}. The DAE is indeed an approximation of the conditional expectation given noise-corrupted data. Moreover, subsequent theoretical analyses~\cite{alain2014regularized, MR2839543} showed that, for small corruption levels, DAEs learn the score of the underlying data distribution, establishing a link to score matching~\cite{JMLR:v6:hyvarinen05a, hyvarinen2007connections}. This perspective later motivated and influenced the development of score-based and diffusion generative models~\cite{NEURIPS2019_3001ef25, NEURIPS2020_92c3b916, ho2020denoising}, which have achieved state-of-the-art results in image generation tasks.
	
	\paragraph{Adversarial purification} Song et al.~\cite{song2018pixeldefend} used generative models with tractable likelihoods~\cite{van2016pixel,salimans2017pixelcnn} to purify adversarial samples by moving them toward local maxima of the data density. 
	Samangouei et al.~\cite{samangouei2018defensegan} employed generative adversarial networks (GANs)~\cite{goodfellow2014generative} to purify adversarial samples by finding the closest point on the GAN manifold to the adversarial sample. 
	The accuracy of both approaches was shown to be significantly diminished under strong adaptive attacks, as reported in~\cite{pmlr-v80-athalye18a}. 
	Our method shares similar motivations with both approaches, i.e., we aim to find a local maximum of the data density that is at the same time the sample in the input space that has minimum distance to the manifold implicitly defined by the expected reconstruction error.
	However, our method is more robust thanks to the combination of three aspects: the score-based generative models can capture the complex characteristics of the data distribution, the expectation over noise corruption drives the purified samples toward the local maxima where the density function is concave, and the use of SAM during purification helps to mitigate the effect of approximation errors in the learned score function. 
	Shi et al.~\cite{shi2021online} introduced several adversarial purification methods, including one based on minimizing the reconstruction error of autoencoders. Our method differs by minimizing the expected reconstruction error under noise corruption rather than the reconstruction error of a clean sample or a single noisy sample. This particular formulation allows us to connect the purified sample to a local maximum of a smoothed approximation of the data density function. Moreover, unlike the autoencoder-based approach of~\cite{shi2021online}, which does not generalize well to complex datasets such as CIFAR-10, our score-based approach effectively captures intricate data characteristics.
	Yoon et al.~\cite{pmlr-v139-yoon21a} employed score-based generative models for adversarial sample purification using simplified Langevin dynamics with random initialization. Nie et al.~\cite{pmlr-v162-nie22a} proposed diffusion-based purification methods that denoise adversarial examples via diffusion models. Both approaches involve substantial randomness during purification. 
	In contrast, our method is deterministic and yields not only higher but also more reliable adversarial accuracy.
	
	\paragraph{Adversarial attack} Common attacks on image classifiers are Fast Gradient Sign Method (FGSM)~\cite{goodfellowSS14}, Projected Gradient Descent (PGD)~\cite{madry2018towards}, and the Carlini \& Wagner (C\&W)~\cite{CW17} attack. These attacks construct adversarial examples by optimizing a perturbation using gradient information while enforcing constraints on the input. For models that include preprocessing steps such as purification, gradients may be obfuscated; the adaptive attack~\cite{pmlr-v80-athalye18a} was proposed to address this. EoT is commonly combined with the adaptive attack to increase effectiveness against stochastic purification procedures. A still stronger adversary is the deterministic adaptive attack, in which the attacker has full knowledge of the classifier, the purification procedure, and the random-number generator. Moreover, the randomness involved in the purification is fixed, and the purification becomes deterministic. Although this deterministic setting may be less practical, it obviates the need for EoT, reduces test-time cost, and yields a reliable lower bound on the adversarial accuracy. 
	
	\section{Method}
	
	\subsection{Learning the data distribution via score-based generative models}
	
	In this work, we focus on the image classification task. Nevertheless, the proposed purification method is generic and can be extended to other data modalities. Let $X$ denote the random variable corresponding to input images, taking values in the compact set $\mathcal{X} = [0,1]^d$, where $d$ is the dimension of the input images. We assume that $X$ admits a probability density function, denoted by $p_X$.  Let $x$ be a clean image and let $x_{\mathrm{adv}}$ denote an adversarial example of $x$ w.r.t.\ a classifier $h$. The corresponding adversarial perturbation is defined as $\delta_{\mathrm{adv}} := x_\mathrm{adv} - x$ and is assumed to satisfy $\lVert \delta_{\mathrm{adv}} \rVert_p \leq \varrho_{\mathrm{adv}}$, where $\varrho_{\mathrm{adv}}$ denotes the adversarial perturbation budget. Typical choices of $p$ are $1$, $2$, and $\infty$. The objective of purification is to recover the clean image $x$ from its adversarial version $x_\mathrm{adv}$ via a purification map $\operatorname{Pur}$ such that $\operatorname{Pur}(x_\mathrm{adv}) \approx x$.
	
	A class of adversarial purification methods is based on learning data features using a conditional expectation model, which is subsequently employed to purify adversarial examples. Let $Y_\sigma$ denote a noise-corrupted version of $X$, defined as 
	\begin{equation}
		Y_\sigma = X + \sigma \varXi,
	\end{equation}
	where $\varXi \sim \mathcal{N}(0, \mathbf{I}_d)$ is a Gaussian random vector independent of $X$. The probability density function of $Y_\sigma$ is given by the convolution of $p_X$ with the Gaussian kernel as
	\begin{equation}
		p_{Y_\sigma}(y) = \int_{\R^d} \frac{1}{{(2\pi\sigma^2)}^{d/2}} \exp\left(-\frac{\lVert y - x \rVert_2^2}{2 \sigma^2}\right) p_X(x) \, \mathrm{d}x.
	\end{equation}
	For sufficiently small values of $\sigma$, $p_{Y_\sigma}$ is a smoothed approximation of the data density $p_X$.
	
	The conditional expectation $\mathbb{E}[X \mid Y_\sigma]$ is the minimum mean square error (MMSE) estimator of $X$ given $Y_\sigma$, i.e.,
	\begin{equation}
		\label{eq:mmse}
		\mathbb{E}[X \mid Y_\sigma] = \arg\min_{g \in L_2(\R^d, \, p_{Y_\sigma})} \mathbb{E}\left[\lVert X - g(Y_\sigma) \rVert_2^2\right],
	\end{equation}
	where $L_2(\R^d, \; p_{Y_\sigma})$ denotes the space of measurable functions $g : \R^d \to \R^d$ that are square-integrable w.r.t.\ $p_{Y_\sigma}$. 
	Using Tweedie's formula, the conditional expectation can be expressed as
	\begin{equation}
		\label{eq:Tweedies_formula}
		\mathbb{E}[X \mid Y_\sigma = y] = y + \sigma^2 s(y;\sigma),
	\end{equation}
	where $s(\cdot;\sigma)$ denotes the score function, defined as 
	\begin{equation}
		s(y; \sigma) = \nabla_y \log p_{Y_\sigma}(y).
	\end{equation}
	We use a deep neural network $s_\theta$ to approximate the score function $s$. Owing to the MMSE characterization of the conditional expectation in~\eqref{eq:mmse} and Tweedie's formula~\eqref{eq:Tweedies_formula}, the score function can be learned by minimizing the denoising score matching objective~\cite{MR2839543},
	\begin{equation}
		\label{eq:score_matching}
		\theta = \arg\min_{\theta'} \, \mathbb{E}_{X, \varXi}\!\left[\lVert \varXi + \sigma\, s_{\theta'}(X + \sigma \varXi; \sigma) \rVert_2^2\right],
	\end{equation}
	where the expectation is approximated by the empirical average over the training data and the noise samples.
	Indeed, solving~\eqref{eq:score_matching} is equivalent to minimizing the expected squared error between the true score function $s$ and the score model $s_\theta$, since
	\begin{equation}
		\mathbb{E}_{X, \varXi}\!\left[\lVert \varXi + \sigma\, s_{\theta}(X + \sigma \varXi;\sigma)\rVert_2^2 \right] = \sigma^2 \mathbb{E}_{X, \varXi}\!\left[\lVert s(X + \sigma \varXi;\sigma) - s_{\theta}(X + \sigma \varXi;\sigma)\rVert_2^2\right] + \operatorname{Const},
	\end{equation}
	where the constant term is independent of $\theta$.
	The proof of this equivalence is provided in Appendix~\ref{appendix:score_matching_equivalence}. In practice, multiple values of $\sigma$ are used to learn the score function at different noise levels, which helps capture the data distribution more accurately~\cite{NEURIPS2019_3001ef25, NEURIPS2020_92c3b916}.
	
	Once the score function is learned, it can be used to purify the adversarial inputs. 
	For example, in~\cite{pmlr-v139-yoon21a}, Yoon et al.~used a simplified version of Langevin dynamics to sample from the distribution $p_{Y_\sigma}$. 
	The method employs the score function as the gradient of the log-density to iteratively update the adversarial inputs toward regions of high probability density.
	This approach is motivated by the approximation $p_X \approx p_{Y_\sigma}$ with sufficiently small values of $\sigma$. 
	However, this algorithm merely seeks points at which the score function vanishes, which may correspond to saddle points of the density $p_{Y_\sigma}$ or even spurious zeros of the learned score function $s_\theta$ arising from approximation errors in the score model. 
	
	In this paper, we propose a novel purification approach based on the local minimization of the expected reconstruction error. Our method drives purified samples toward local maxima of the density $p_{Y_\sigma}$ while mitigating the impact of approximation errors in the score function.
	
	\subsection{Purification via sharpness-aware minimization of the expected reconstruction error (PurSAMERE)}
	
	We define the expected reconstruction error for a given $x$ as 
	\begin{equation}
		\label{eq:ere}
		R(x; \sigma) = \mathbb{E}_{\varXi} \left[ \lVert \varXi + \sigma s(x+\sigma\varXi; \sigma) \rVert_2^2 \right ], \quad \varXi \sim \mathcal{N}(0, \mathbf{I}_d).
	\end{equation}
	Using Tweedie's formulation, $\mathbb{E}[X \mid Y = x + \sigma\varXi] = x + \sigma\varXi + \sigma^2 s(x + \sigma\varXi; \sigma)$, the function $R$ can be expressed as
	\begin{equation}
		\label{eq:ere_vs_ce}
		R(x; \sigma) \equiv \dfrac{1}{\sigma^2}\mathbb{E}_{\varXi} \left[ \lVert x - \mathbb{E}[X \mid Y_\sigma=x + \sigma \varXi] \rVert_2^2 \right ],
	\end{equation}
	which measures the expected squared distance between $x$ and its conditional expectation reconstruction under additive noise corruption, $\mathbb{E}[X \mid Y_\sigma = x + \sigma \varXi]$.
	
	As can be observed from~\eqref{eq:ere_vs_ce}, our formulation of the expected reconstruction error is normalized by $\sigma^2$. This normalization helps stabilize the numerical optimization used within PurSAMERE, where multiple noise scales are employed and sequentially decreased toward zero.
	
	The expectation of $R(X;\sigma)$ w.r.t.\ $X$ is
	\begin{equation}
		\mathbb{E}_{X}\!\left[R(X;\sigma)\right]
		\;=\;\frac{1}{\sigma^{2}}\,
		\mathbb{E}\!\left[\|X-\mathbb{E}[X\mid Y_{\sigma}]\|_{2}^{2}\right].
	\end{equation}
	Since $\mathbb{E}[X\mid Y_\sigma]$ is the MMSE estimator of $X$ given $Y_\sigma$, we can compare it against the trivial estimator $g(Y_\sigma)=Y_\sigma$ to obtain
	\begin{equation}
		\mathbb{E}\!\left[\|X-\mathbb{E}[X\mid Y_{\sigma}]\|_{2}^{2}\right]
		\;\le\;\mathbb{E}\!\left[\|X-Y_{\sigma}\|_{2}^{2}\right]
		\;=\;\sigma^{2}\mathbb{E}\!\left[\|\Xi\|_{2}^{2}\right]
		\;=\;\sigma^{2}d,
	\end{equation}
	and hence $\mathbb{E}_{X}[R(X;\sigma)]\le d$. 
	Because of the normalization by $\sigma^2$, $R(x;\sigma)$ has a dimension-dependent baseline. In purification, we therefore rely on its variation w.r.t.\ $x$. Empirically, within a local neighborhood, the $x$-dependent part of $R(x;\sigma)$ is typically smaller for in-distribution inputs and larger for out-of-distribution or adversarial inputs.
	
	Motivated by this observation, we propose to purify a given possibly adversarial sample $x_{\mathrm{adv}}$ by minimizing the expected reconstruction error $R(x; \sigma)$ in a small neighborhood of $x_{\mathrm{adv}}$. Furthermore, we employ sharpness-aware minimization (SAM)~\cite{foret2021sharpnessaware} to enhance the robustness of the purification by driving the purified samples toward flat regions of the expected reconstruction error, thereby avoiding spurious minima. The purification is formulated as the following constrained optimization problem:
	\begin{equation}
		\label{eq:purification_via_reconstruction}
		x^\star = \arg\min_{x \in [0, 1]^d} \;\max_{\lVert \varepsilon \rVert_q \leq \varrho_{\mathrm{sam}}}R(x + \varepsilon;\sigma), \quad \text{s.t.} \quad \lVert x - x_{\mathrm{adv}} \rVert_q \leq \varrho_{\mathrm{pur}},
	\end{equation}
	where typical values of $q$ are $1$, $2$, and $\infty$. In this work, we set $q = 2$. Here, $\varrho_{\mathrm{pur}} > 0$ is the purification radius, which controls the size of the search neighborhood $B_2(x_{\mathrm{adv}}, \varrho_{\mathrm{pur}})$, and $\varrho_{\mathrm{sam}} > 0$ is the sharpness radius, which controls the size of the SAM perturbation neighborhood $B_2(x, \varrho_{\mathrm{sam}})$. Unlike conventional SAM, which operates on model parameters during training, we apply a SAM-like worst-case objective in the input space during purification.

	This \textit{purification-time} SAM step searches for a point $x^\star$ within the neighborhood of $x_{\mathrm{adv}}$ that minimizes $R_{\theta}$ even under the worst-case input perturbation, thereby ensuring that the minimizer lies in a relatively flat basin of $R_{\theta}$. Given the score model $s_\theta$ learned via~\eqref{eq:score_matching}, we approximate $R(x;\sigma)$ by
	\begin{equation}
		\label{eq:ere_approx}
		R_\theta(x; \sigma) = \mathbb{E}_{\varXi} \left[ \lVert \varXi + \sigma s_\theta(x+\sigma\varXi; \sigma) \rVert_2^2 \right ].
	\end{equation}

	The purification criterion has two main benefits. 
	\begin{itemize}
		\item First, minimizing $R(x; \sigma)$ in a neighborhood of $x_{\mathrm{adv}}$ encourages the purifier to find a nearby point $x^\star$ with small reconstruction error under the conditional expectation model. This effectively steers $x^\star$ toward a local maximizer of the smoothed density $p_{Y_\sigma}$, assuming that such a point exists within the $\ell_2$-ball $B_2(x_{\mathrm{adv}}, \varrho_{\mathrm{pur}})$, as discussed in Section~\ref{sec:theoretical_analysis}.
		\item Second, purification based on $R_\theta$ is sensitive to approximation errors in the learned score function $s_\theta$, 
		which can introduce spurious local minima of $R_\theta(x; \sigma)$ that do not correspond to genuine density maxima. To mitigate this, we employ SAM to guide purified samples toward flat regions of $R_\theta(x; \sigma)$, which helps suppress sharp minima induced by approximation errors.
	\end{itemize}

	\subsection{Relation between purified samples and the local maxima of the data density}\label{sec:theoretical_analysis}

	In this section, we analyze the behavior of the minimizers of the expected reconstruction error $R(x; \sigma)$ defined in~\eqref{eq:ere}. In particular, we show that minimizing $R(x; \sigma)$ drives $x$ toward local maxima of the log-density $\log p_X$ as illustrated in Fig.~\ref{fig:ere_vs_logpx}.
	
	\begin{figure} [h!]
		\centering
		\includegraphics[width=0.5\textwidth]{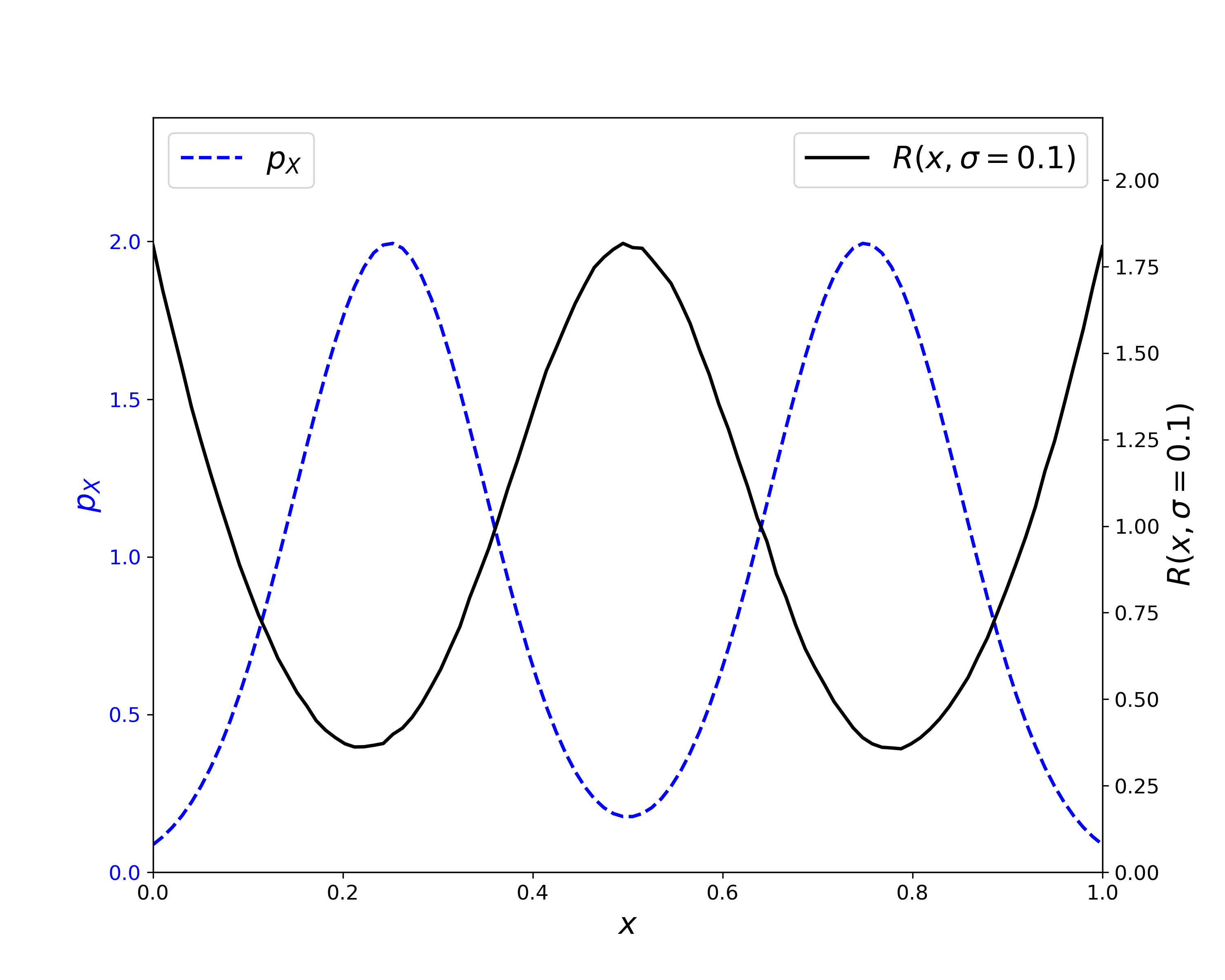} 
		\caption{The expected reconstruction error $R(x; \sigma)$ of a truncated Gaussian mixture model in 1D, $p_X(x) \propto 0.5 \; \mathcal{N}(x; 0.25, 0.1) + 0.5 \; \mathcal{N}(x; 0.75, 0.1)$. The expected reconstruction error $R(x; \sigma)$ with $\sigma=0.1$ (right) attains local minima approximately at local maxima of the density $\log p_X$ (left).}
		\label{fig:ere_vs_logpx}
	\end{figure}

	\begin{proposition}[Expected Reconstruction Error Expansion]\label{thm:ere_expansion}
		Let $s:\mathbb{R}^d\times[0,\sigma_{\max}]\to\mathbb{R}^d$ and let $\varXi\sim\mathcal{N}(0,I_d)$. Assume that for every $x\in\mathbb{R}^d$ and $\xi\in \R^d$, the map $\sigma \mapsto s(x+\sigma\xi,\sigma)$ is twice continuously differentiable on $[0,\sigma_{\max}]$.
		Define
		\[
		\phi(x,\sigma,\xi)
		:= \frac{d^2}{d\sigma^2}s(x+\sigma\xi,\sigma).
		\]
		Fix a compact set $K \subset \R^d$. Assume there exist constants $C>0$, and $m\ge 0$,
		such that for all $x\in K$, all $\sigma\in[0,\sigma_{\max}]$, and all $\xi\in\mathbb{R}^d$,
		\[
		\max_{1\le i\le d} |\phi_i(x,\sigma,\xi)|
		\;\le\; C\bigl(1+\|\xi\|_2^m\bigr).
		\]
		Then, as $\sigma\to 0$,
		\begin{equation}
			\label{eq:ere_taylor_expansion}
			R(x; \sigma) = d +\sigma^2\|s(x,0)\|_2^2 + 2\sigma^2\,\operatorname{tr}\bigl(\nabla_x s(x,0)\bigr) + \mathcal{O}(\sigma^3), \quad x \in K,
		\end{equation}
		where $\nabla_x s(x; 0)$ denotes the Jacobian of $s(x; 0)$ w.r.t.\ $x$.
	\end{proposition}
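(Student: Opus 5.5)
Fix $x\in K$ and $\xi\in\R^d$ and set $g(\sigma):=s(x+\sigma\xi,\sigma)$. The plan is to Taylor-expand $g$ to second order in $\sigma$ with an integral remainder, insert the expansion into the integrand $\|\xi+\sigma g(\sigma)\|_2^2$, and take Gaussian expectations term by term. The growth hypothesis on $\phi$ is what makes the remainder integrable and uniformly of order $\sigma^3$.

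By the assumed $C^2$ regularity of $\sigma\mapsto g(\sigma)$ on $[0,\sigma_{\max}]$, Taylor's theorem with integral remainder gives
\[
g(\sigma)=g(0)+\sigma g'(0)+r(\sigma),\qquad r(\sigma)=\int_0^\sigma(\sigma-t)\,\phi(x,t,\xi)\,\mathrm{d}t .
\]
Here $g(0)=s(x,0)$. For the first derivative, the chain rule gives $g'(0)=\nabla_x s(x,0)\,\xi+\partial_\sigma s(x,0)$. Strictly, this requires $s$ to be jointly differentiable, which I will take as implicit in the hypothesis. If I want to avoid that, it suffices to note that $g'(0)=A\xi+b$ for some matrix $A=\nabla_x s(x,0)$ and vector $b$ independent of $\xi$: that is all that is used below. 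From the growth bound on $\phi$, each component satisfies $|r_i(\sigma)|\le \tfrac{\sigma^2}{2}C(1+\|\xi\|_2^m)$, so $\|r(\sigma)\|_2\le \tfrac{\sqrt d}{2}C\sigma^2(1+\|\xi\|_2^m)$ uniformly in $x\in K$ and $\sigma\le\sigma_{\max}$.

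Next I expand the integrand as
\[
\|\xi+\sigma g\|_2^2=\|\xi\|_2^2+2\sigma\,\xi^\top g(\sigma)+\sigma^2\|g(\sigma)\|_2^2
\]
and take expectations over $\xi\sim\mathcal N(0,I_d)$.
\begin{itemize}
\item The first term gives $\E\|\varXi\|_2^2=d$.
\item The cross term is
\[
2\sigma\,\E[\varXi^\top s(x,0)]+2\sigma^2\,\E[\varXi^\top(A\varXi+b)]+2\sigma\,\E[\varXi^\top r(\sigma)] .
\]
Its first piece vanishes, its second equals $2\sigma^2\operatorname{tr}(A)=2\sigma^2\operatorname{tr}(\nabla_x s(x,0))$, and its third is bounded by $\sqrt d\,C\sigma^3\,\E[\|\varXi\|_2(1+\|\varXi\|_2^m)]=\mathcal O(\sigma^3)$ because Gaussian moments are finite.
\item The quadratic term is $\sigma^2\|s(x,0)\|_2^2$ plus corrections $\sigma^2\big(2\sigma\, s(x,0)^\top g'(0)+\dots\big)$. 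Each correction carries an extra factor of $\sigma$ and has a coefficient that is a polynomial in $\|\xi\|_2$ with finite Gaussian expectation. Using $\|g(\sigma)-g(0)\|\le\sigma\|A\xi+b\|+\|r\|$ together with the moment bounds, these contribute $\mathcal O(\sigma^3)$.
\end{itemize}

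The main obstacle is uniformity over $x\in K$. The $\mathcal O(\sigma^3)$ constants involve $\sup_{x\in K}\|s(x,0)\|$, $\|\nabla_x s(x,0)\|$ and $\|\partial_\sigma s(x,0)\|$, and finiteness of these suprema needs continuity of the derivatives in $x$. I will argue this from compactness of $K$ under the implicit joint-$C^1$ assumption. Alternatively, the bound on $\phi$ alone controls $g'$ via $g'(\sigma)-g'(0)=\int_0^\sigma\phi\,\mathrm{d}t$. If uniformity cannot be secured, the statement still holds pointwise for each $x\in K$, with remainder constant depending on $x$ through these quantities. Collecting the terms above gives \eqref{eq:ere_taylor_expansion}.
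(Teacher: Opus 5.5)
Your proposal is correct and follows essentially the same route as the paper's proof. Both arguments make a second-order Taylor expansion of $\sigma\mapsto s(x+\sigma\xi,\sigma)$ at $\sigma=0$, use the Gaussian identities $\E\|\varXi\|_2^2=d$ and $\E[\varXi^\top(A\varXi+b)]=\operatorname{tr}(A)$ for the leading terms, and use the polynomial growth bound on $\phi$ to push all remaining terms into $\mathcal{O}(\sigma^3)$.

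Your integral remainder is slightly cleaner than the paper's single-point Lagrange remainder, which strictly needs a separate intermediate point for each component. The paper also leaves implicit the joint-differentiability and uniformity-in-$K$ caveats you raise. One correction to your ``alternatively'' remark: the bound on $\phi$ controls $g'(\sigma)-g'(0)$ but not $\sup_{x\in K}\|g'(0)\|_2$. Uniformity over $K$ therefore genuinely rests on the continuity assumption.
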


	The proof of Proposition~\ref{thm:ere_expansion} is given in Appendix~\ref{appendix:ere_expansion}. For $s(x, \sigma)$ to be twice continuously differentiable at $\sigma=0$ for all $x \in \mathbb{R}^d$, it is necessary for $p_X$ to be strictly positive on $\mathbb{R}^d$. For a compactly supported density $p_X$, the score function $s(x, 0)$ is undefined outside of its support. In this case, we consider $s(x, 0)$ as the score function of the Gaussian-smoothed density with a fixed small noise scale $\sigma_{\min} \ll 1$, so that $s(x, 0)$ is well defined for all $x \in \mathbb{R}^d$.

	Let us analyze the terms on the right-hand side of~\eqref{eq:ere_taylor_expansion} in Proposition~\ref{thm:ere_expansion}, assuming that its assumptions hold and $\sigma$ is sufficiently small. The first term, $d$, is independent of $x$; thus it does not affect the purification process. The second term $\sigma^2 \|s(x; 0)\|_2^2$ penalizes points where the score magnitude is large. Minimizing this term drives $s(x; 0) \rightarrow 0$, which corresponds to the stationary points of the log-density $\log p_{X}(x)$. Since stationary points alone may correspond to maxima, minima, or saddle points, this term by itself is insufficient to characterize typical data samples.
	
	The third term provides additional geometric discrimination. Since $s(x, 0) = \nabla_x \log p_{X}(x)$, the third term $2 \sigma^2 \operatorname{tr}(\nabla_x s(x; 0))$ equals $2 \sigma^2 \operatorname{tr}(\nabla_x^2 \log p_{X}(x))$, which is proportional to the trace of the Hessian of $\log p_{X}(x)$. Although this term cannot guarantee that the Hessian of $\log p_{X}(x^\star)$ is negative definite, minimizing it biases $x$ toward regions where $\log p_{X}(x)$ is locally concave.
	As a result, minimizing $R(x; \sigma)$ drives $x$ toward local maxima of the log-density $\log p_X(x)$, and hence toward peaks of $p_X(x)$, provided such peaks exist within the search neighborhood. Therefore, our choice of objective function $R(x; \sigma)$ for purification is more discriminative than simply forcing $s(x; \sigma) = 0$ as in~\cite{pmlr-v139-yoon21a}.

	To gain theoretical insights into the behavior of the minimizer of $R_\theta(x; \sigma)$~\eqref{eq:ere_approx}, we examine the case where $s_\theta$ is piecewise linear w.r.t.\ $x$. This assumption covers certain neural networks with ReLU activation functions and without layer normalization. Under certain assumptions, Theorem~\ref{thm:local_max_density_informal} shows that the minimizer of $R_\theta(x; \sigma)$ converges to the local maximizer of the log-density $f_\theta$, where $s_\theta = \nabla_x f_\theta$, at an exponential rate as $\sigma \to 0$. The detailed statement and proof of Theorem~\ref{thm:local_max_density_informal} are given in Appendix~\ref{appendix:local_max_density}.
	
	\begin{theorem}[Local Recovery of Density Maximizer (Informal)]~\label{thm:local_max_density_informal}
		Consider a score model $s_\theta(x,\sigma)$ that is piecewise affine,
		derives from a potential function $f_\theta$ (so $s_\theta = \nabla_x f_\theta$),
		and is locally $\mu$-strongly concave inside a region $D \in [0, 1]^d$.
		Assume the maximizer $x_\sigma^\star$ of $f_\theta(\cdot,\sigma)$ lies
		strictly inside $D$, at least with $\ell_2$-distance $\sqrt{d}\varrho$ away from its boundary.
		Let $x_\sigma'$ be any minimizer of $R_\theta(\cdot,\sigma)$ over $D_{-\sqrt{d}\,\varrho}$, where 
		\[
		D_{-\sqrt{d}\,\varrho} := \{x \in D: \inf_{y \notin D} \|x - y\|_2 \geq \sqrt{d}\,\varrho \}
		\]
		is assumed to be nonempty. If the noise level satisfies $\sigma < \rho$, then
		\[
		\|x_\sigma' - x_\sigma^\star\|_2
		\;\le\;
		\dfrac{\sqrt{C}}{\sigma \mu} \exp \left (-\dfrac{d}{8}\left(\dfrac{\varrho}{\sigma} - 1\right)^2 \right),
		\]
		where $C > 0$ is a constant independent of $\sigma$.
		In particular, $x_\sigma'$ converges to $x_\sigma^\star$ exponentially fast 
		as $\sigma \to 0$.
	\end{theorem}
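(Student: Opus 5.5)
The plan is to use the affine structure of $s_\theta$ inside the concave region $D$. On $D$, the piecewise-affine score derives from a $\mu$-strongly concave potential, and we take it to be affine there: $s_\theta(y,\sigma)=A(y-x_\sigma^\star)$ for $y\in D$, with $A=\nabla_x^2 f_\theta$ symmetric and $A\preceq-\mu I$. This uses $s_\theta(x_\sigma^\star,\sigma)=0$. Outside $D$ we only use a bound on $s_\theta$, say $\|s_\theta(y,\sigma)\|_2\le L(1+\|y\|_2)$, which holds for a piecewise-affine network.

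For $x\in D_{-\sqrt d\varrho}$ we split $R_\theta(x;\sigma)$ by the event $E=\{\|\sigma\varXi\|_2<\sqrt d\varrho\}$. On $E$ the point $x+\sigma\varXi$ lies in $D$. Writing $e=x-x_\sigma^\star$, we get
\[
\varXi+\sigma s_\theta=(I+\sigma^2A)\varXi+\sigma Ae .
\]
Hence
\[
R_\theta(x;\sigma)=Q(x)+r(x),\qquad Q(x)=\mathbb{E}\bigl\|(I+\sigma^2A)\varXi+\sigma Ae\bigr\|_2^2=\operatorname{tr}\bigl((I+\sigma^2A)^2\bigr)+\sigma^2\|Ae\|_2^2 .
\]
The correction is
\[
r(x)=\mathbb{E}\bigl[\bigl(\|\varXi+\sigma s_\theta(x+\sigma\varXi)\|_2^2-\|(I+\sigma^2A)\varXi+\sigma Ae\|_2^2\bigr)\mathds{1}_{E^c}\bigr].
\]

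The first term $Q$ is minimized exactly at $x_\sigma^\star$, and it grows quadratically: $Q(x)-Q(x_\sigma^\star)\ge\sigma^2\mu^2\|x-x_\sigma^\star\|_2^2$.

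The main step is to bound $|r(x)|\le C'\,\mathbb{P}(E^c)^{1/2}$ uniformly over the compact set $D\subset[0,1]^d$, using Cauchy--Schwarz. The fourth moments of the integrand are bounded by a constant $C$ that depends on $L$, $\|A\|$, $d$, and $\sigma_{\max}$, but not on $\sigma$. The tail probability is controlled by Gaussian concentration of $\|\varXi\|_2$ (Lipschitz concentration, using $\mathbb{E}\|\varXi\|_2\le\sqrt d$). Since $\sigma<\varrho$,
\[
\mathbb{P}(\|\varXi\|_2\ge\sqrt d\,\varrho/\sigma)\le\exp\Bigl(-\tfrac d2\bigl(\tfrac\varrho\sigma-1\bigr)^2\Bigr).
\]
Taking the square root gives the factor $\exp(-\tfrac d4(\varrho/\sigma-1)^2)$. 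This estimate is where the hypothesis $\sigma<\varrho$ enters.

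Next we compare the two points. Since $x_\sigma^\star\in D_{-\sqrt d\varrho}$, minimality of $x'_\sigma$ gives $R_\theta(x'_\sigma)\le R_\theta(x_\sigma^\star)$. Therefore
\[
\sigma^2\mu^2\|x'_\sigma-x_\sigma^\star\|_2^2\le Q(x'_\sigma)-Q(x^\star_\sigma)\le |r(x'_\sigma)|+|r(x^\star_\sigma)|\le 2C'e^{-\frac d4(\varrho/\sigma-1)^2}.
\]
Taking square roots yields
\[
\|x'_\sigma-x^\star_\sigma\|_2\le\frac{\sqrt{C}}{\sigma\mu}\exp\Bigl(-\tfrac d8\bigl(\tfrac\varrho\sigma-1\bigr)^2\Bigr).
\]
This is exactly the claimed rate, with the constants absorbed into $C$.

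The expected obstacle is the uniform moment bound behind $C$. It requires a growth bound on the piecewise-affine $s_\theta$ that is uniform in $\sigma$, together with the formal assumption that the affine piece and $A$ are common to all of $D$. I would state both of these explicitly as part of the formal hypotheses.
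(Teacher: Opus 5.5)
Your proposal is correct and follows essentially the same route as the paper. You write $R_\theta$ on $D_{-\sqrt d\,\varrho}$ as the exact quadratic coming from the affine score $H_\theta(\sigma)(x-x^\star_\sigma)$ plus a tail correction. You bound that correction via Cauchy--Schwarz and Gaussian concentration of $\|\varXi\|_2$, which gives the factor $\exp(-\frac d4(\varrho/\sigma-1)^2)$. You then use $\sigma^2\|H_\theta(\sigma)(x-x^\star_\sigma)\|_2^2\ge\sigma^2\mu^2\|x-x^\star_\sigma\|_2^2$ to localize the minimizer.

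The differences are cosmetic. You split on the event $\{\|\sigma\varXi\|_2<\sqrt d\,\varrho\}$ instead of on $\mathds{1}_{D^c}(x+\sigma\varXi)$, and you compare $R_\theta(x'_\sigma)\le R_\theta(x^\star_\sigma)$ directly instead of invoking the auxiliary quadratic-minimizer lemma. The $\sigma$-uniform growth bound you flag is exactly what the paper's global $L$-Lipschitz assumption supplies, together with $s_\theta(x^\star_\sigma;\sigma)=0$ and boundedness of $D$.
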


	Under the assumptions of Theorem~\ref{thm:local_max_density_informal}, minimizing the expected reconstruction error $R_\theta(x; \sigma)$ w.r.t.\ $x$ recovers the local maximizer of $f_\theta(\cdot, \sigma)$ as $\sigma \rightarrow 0$. In this limit, $f_\theta(\cdot, \sigma)$ converges to the log-data density $\log p_X$, implying that the minimizer of $R_\theta(x; \sigma)$ approaches the true density maximizer. In practice, we use $\sigma \geq \sigma_\mathrm{min} > 0$, so the theorem’s conditions are idealized; nonetheless, it supports the intuition that our purifier drives inputs toward high-density regions of $p_X$. Such regions correspond to typical data and are therefore expected to exhibit increased robustness against adversarial perturbations.
	
	\section{Algorithm}

	In this paper, we use the score models developed in~\cite{NEURIPS2020_92c3b916} for our purification method. These score models are trained by minimizing the objective function~\eqref{eq:score_matching} for a sequence of noise levels $\sigma_1 > \sigma_2 > \cdots > \sigma_L$, where $\{ \sigma_i \}_{i=1}^L$ is a geometric sequence. We solve the optimization problem~\eqref{eq:purification_via_reconstruction} by sequentially refining $x$ over $L$ decreasing noise levels. Starting from $\sigma_1$ with the initialization $x_0 = x_\mathrm{adv}$, we update $x$ over $L$ iterations down to $\sigma_L$ according to
	\begin{equation}
		\begin{aligned}
			x_0 &= x_\mathrm{adv}, \\
			x^{+}_{k} & = x_{k-1} + \varrho_{\mathrm{sam}} \frac{\nabla_x \widehat{R}_\theta(x_{k-1}; \sigma_{k})}{\|\nabla_x \widehat{R}_\theta(x_{k-1}; \sigma_{k})\|_2}, \\
			x^{\text{max}}_{k} &= \operatorname{Projection}_{[0, 1]^d} (x^{+}_{k}),\\
			x'_{k} &=  x_{k-1} - \eta_k \nabla_x \widehat{R}_\theta(x^{\text{max}}_{k}; \sigma_{k}),\\
			x_{k} &= \operatorname{Projection}_{[0, 1]^d \cap B_2(x_{\mathrm{adv}}, \varrho_{\mathrm{pur}})} (x_{k}'),
		\end{aligned}
	\end{equation}
	for $k=1, \dots, L$. Here, $\eta_k$ is the step size at iteration $k$, $B_2(x_{\mathrm{adv}}, \varrho_{\mathrm{pur}})$ is the $\ell_2$-ball centered at $x_{\mathrm{adv}}$ with radius $\varrho_{\mathrm{pur}}$, and $\widehat{R}_\theta$ is an empirical approximation of the expected reconstruction error $R_\theta$ defined in~\eqref{eq:ere_approx},
	\begin{equation}\label{eq:reconstruction_error_mc}
		\widehat{R}_\theta(x; \sigma_k) = \frac{1}{m} \sum_{i=1}^m \| \xi_{ki} + \sigma_k s_\theta(x + \sigma_k\xi_{ki}; \sigma_k) \|_2^2,
	\end{equation}
	where $\{\xi_{ki}\}_{k=1,\dots,L;\, i=1,\dots,m}$ are i.i.d.~samples of $\varXi$.

	To update the purified samples, we use Adam~\cite{kingma2014adam} as the optimizer. The complete algorithm is summarized in Algorithm~\ref{alg:pumere}.
	\begin{algorithm}[H]
		\caption{PurSAMERE}
		\label{alg:pumere}
		\begin{algorithmic}[1]
			\Require Adversarial sample $x_{\mathrm{adv}}$, score model $s_\theta$ trained for noise levels including $\{\sigma_i\}_{i=1}^L$, purification radius $\varrho_\mathrm{pur}$, SAM radius $\varrho_\mathrm{sam}$, number of Monte Carlo samples $m$, Adam optimizer $\mathcal{A}$. Learning rate max and min: $\eta_{\max} = 0.1,\; \eta_{\min} = 0.001 $. Optional:
			generator $G$ and seeds $\{\operatorname{sd}_i\}_{i=1}^L$.
			
			\State $x_0 \gets x_{\mathrm{adv}}$
			\State Set Adam optimizer $\mathcal{A}$ with initial learning rate $\eta_{\max}$
			\For{$k = 1$ to $L$}
			\State Using generator $G$ with seed $\operatorname{sd}_{k}$ to generate $m$ i.i.d.~samples $\{\xi_{ki}\}_{i=1}^m$ from $\mathcal{N}(0, \mathbf{I}_d)$
			\State Compute the approximate gradient $\nabla_x \widehat{R}_\theta(x_{k-1}; \sigma_{k})$ using back-propagation based on~\eqref{eq:reconstruction_error_mc}
			\State $x_{k}^{+} \gets$ $x_{k-1} + \varrho_{\mathrm{sam}} \dfrac{\nabla_x \widehat{R}_\theta(x_{k-1}; \sigma_{k})}{\|\nabla_x \widehat{R}_\theta(x_{k-1}; \sigma_{k})\|_2}$
			\State $x_{k}^{\text{max}} \gets$ Projection $x_{k}^{+}$ onto $[0, 1]^d$
			\State Compute the approximate gradient $\nabla_x \widehat{R}_\theta(x_{k}^{\text{max}}; \sigma_{k})$ using back-propagation based on~\eqref{eq:reconstruction_error_mc}
			\State $x_{k}' \gets$ Update the state of Adam optimizer $\mathcal{A}$ with $\nabla_x \widehat{R}(x_{k}^{\text{max}} ; \sigma_{k})$
			\State $x_{k} \gets$ Projection $x_{k}'$ onto $[0, 1]^d \cap B_2(x_{\mathrm{adv}}, \varrho_{\mathrm{pur}})$
			\State $lr = \eta_{\min} + (\eta_{\max} - \eta_{\min}) \times (k-1)/(L-1)$ 
			\State Update learning rate of Adam optimizer $\mathcal{A}$ to $lr$
			\EndFor\\
			\Return Purified sample $x_L$
		\end{algorithmic}
	\end{algorithm}

	In Algorithm~\ref{alg:pumere}, the learning rate is linearly reduced from $0.1$ at the largest noise scale $\sigma_1$ to $0.001$ at the smallest noise scale $\sigma_L$. This procedure is essential to ensure stable and precise updates at fine noise scales, while allowing sufficiently large exploration steps at coarse noise scales.

	Although our purification method is deterministic by design, the Monte Carlo approximation of the expected reconstruction error~\eqref{eq:reconstruction_error_mc} introduces a small degree of randomness. This randomness can be controlled by combining the following techniques: increasing the number of Monte Carlo samples $m$, reducing the learning rate, and using adaptive optimizers that are robust to noisy gradients. Our experiments show that the purification performance of our algorithm is not affected by the randomness introduced by the Monte Carlo approximation. Nevertheless, we optionally use a random number generator with fixed seeds to generate the noise samples for the Monte Carlo approximation. In all experiments with our method reported in this paper, we activate this fixed-seed option, so that the purification map is deterministic conditional on the input. In the deterministic white-box threat model, the attacker is assumed to know the generator and the seeds (hence no EoT is required).

	\paragraph{Data augmentation for training the classifier} To further enhance adversarial robustness, we augment the training dataset of the classifier with purified samples. Given a clean training sample $x$, we generate its purified version $x_{\mathrm{pur}}$ by applying the purification method described in Algorithm~\ref{alg:pumere} using $x$ as the input. In particular, we deactivate the SAM option during the purification to encourage the classifier to adapt to the original landscape of the score model $s_\theta$. The training dataset is then augmented with the purified samples $\{(x_{\mathrm{pur}}, y)\}$, where $y$ is the label of the clean sample $x$. The classifier is trained on both the original training samples and the purified samples. The loss function for training the classifier is given by
	\begin{equation}
		\mathcal{L} = \mathbb{E}_{(x, y) \sim \mathcal{D}} \left[ \dfrac{1}{2}\ell (h(x), y) + \dfrac{1}{2}\ell (h(x_{\mathrm{pur}}), y) \right],
	\end{equation}
	where $\mathcal{D}$ is the training dataset, $h$ is the classifier, and $\ell$ is the cross-entropy loss function. This data augmentation strategy encourages the classifier to be robust to purified samples.
	
	\section{Experiments}

	We evaluate the adversarial robustness of our purification method against the following attacks:
	\begin{itemize}
		\item gPGD20: a gray-box attack using 20 PGD steps.
		\item BPDA$s$-det: a deterministic white-box adaptive attack using Backward Pass Differentiable Approximation (BPDA) with $s$ steps, where the random number generator and seed are fixed for both the purification and the attack. 
	\end{itemize}
	For the gPGD20 attack, the adversary has access to the classifier but not the purification process, and adversarial examples are generated by attacking the classifier. 
	For the deterministic white-box adaptive attack BPDA$s$-det, the defense is deterministic, and the adversary has full knowledge of both the classifier and the purification process, including the random number generator.

	The BPDA technique is employed in the white-box attacks to estimate the gradient of the loss function w.r.t.\ the input, since the purification map is not readily differentiable due to projection/clipping and iterative optimization.
	In our BPDA implementation, we use the straight-through (identity) backward approximation from Athalye et al.~\cite[Sec. 4.1.1]{pmlr-v80-athalye18a}. In the forward pass, we compute $x_{\mathrm{pur}}=\mathrm{Pur}(x)$, while in the backward pass, we replace $\nabla_x \mathrm{Pur}(x)$ by the identity. Equivalently,
	\[
	\nabla_x \,\ell\!\left(h(\mathrm{Pur}(x)),y\right)\;\approx\;\nabla_x \,\ell\!\left(h(x),y\right)\big|_{x=\mathrm{Pur}(x)}.
	\]
	This is an approximate gradient; accordingly, we use sufficiently many BPDA steps (e.g., $s=20$ and $s=200$) to obtain a strong adaptive attack.
	
	Using these approximated gradients, adversarial examples are generated iteratively. Various optimization-based update rules can be used, such as the Fast Gradient Sign Method (FGSM)~\cite{goodfellowSS14}, PGD~\cite{madry2018towards}, or the Carlini~\&~Wagner (C\&W)~\cite{CW17} attack. In this work, PGD is adopted, as it is substantially more effective than FGSM and achieves performance comparable to other strong iterative optimization-based attacks. 
	
	For purification methods that involve significant randomness, the EoT technique is necessary to correctly estimate gradients. We denote this attack as BPDA$s$+EoT$k$, which runs $k$ statistically independent purification processes to estimate the gradient required for each of the $s$ BPDA steps. This is the standard EoT estimator for $\nabla_x \,\mathbb{E}_{\omega}\!\left[\ell\!\left(h(\mathrm{Pur}_{\omega}(x)),y\right)\right]$, using $k$ Monte Carlo samples at each BPDA step; see Athalye et al.~\cite[Sec. 4.2]{pmlr-v80-athalye18a}.

	\paragraph{Numerical results on CIFAR-10 dataset} We perform the gPGD20 and BPDA20-det attacks for all test samples, while the BPDA200-det and BPDA$200$+EoT$20$ attacks are conducted on a subset of 1,000 test samples due to the limitations of computational resources. In addition, the BPDA200-det attack is evaluated for the $\ell_\infty$, $\ell_2$, and $\ell_1$ perturbations, while the gPGD20 and BPDA20-det attacks are only tested against $\ell_\infty$ adversarial perturbations. The adversarial perturbation budgets for the $\ell_\infty$, $\ell_2$, and $\ell_1$ attacks are set to $8/255$, $1$, and $12$, respectively. The step size of each PGD step is set to $2/255$ for the gPGD20 and BPDA20-det attacks under the $\ell_\infty$ perturbation budget. For the BPDA200-det attacks, the step size is computed as $2.5 \varrho_{\mathrm{adv}}/200$, which yields $0.00039$, $0.0125$, and $0.15$ for the $\ell_\infty$, $\ell_2$, and $\ell_1$ attacks, respectively.

	We employ the trained score-based model from~\cite{NEURIPS2020_92c3b916} without changing the settings. The model is trained with $L=232$ noise levels in a geometric sequence from $\sigma_1 = 50$ to $\sigma_L = 0.01$. We set $m=4$ to approximate the expected reconstruction error. 
	
	The purification budget $\varrho_{\mathrm{pur}}$ is varied in the range $[2,3]$—that is, from $2$ to $3$ times the $\ell_2$ adversarial perturbation budget—to study the trade-off between clean and adversarial accuracies. The sharpness radius must be smaller, yet of the same order of magnitude as, $\varrho_{\mathrm{pur}}$ to ensure that the purified samples lie in flat regions of the expected reconstruction error $R_\theta$. Hence, we set $\varrho_{\mathrm{sam}} = 1.5$. We use the WideResNet-28-10~\cite{zagoruyko2016wide} architecture for the classifier, which is trained on both the original and purified training samples generated with $\varrho_{\mathrm{pur}} = 3$.

	The adversarial accuracies of our method under the gPGD20 attack are summarized in Table~\ref{tab:cifar_results_graybox}. Our method effectively preserves high accuracy on both clean and adversarial data.

	\begin{table}[H]
		\caption{Adversarial accuracies (\%) on CIFAR-10 of our method under the gPGD20 attack with an $\ell_\infty$ perturbation bound of $\varrho_{\mathrm{adv}} = 8/255$.}\label{tab:cifar_results_graybox}
		\centering
		\begin{tabular}{l|c|c}
			\toprule
			& clean & gPGD20\\ 
			\midrule
			Our method & & \\
			$\qquad$ $\varrho_{\mathrm{pur}} = 2$   & \textbf{91.18} & \textbf{87.88}\\  
			$\qquad$ $\varrho_{\mathrm{pur}} = 2.5$ & 88.45          & 86.73\\  
			$\qquad$ $\varrho_{\mathrm{pur}} = 3$   & 85.68          & 84.17\\  
			\bottomrule
		\end{tabular}
	\end{table}

	The adversarial accuracies of our purification method compared to several representative purification approaches under the BPDA$20$-det and BPDA$20$+EoT$10$ attacks are summarized in Table~\ref{tab:cifar_results} and Table~\ref{tab:cifar10_BPDA200+EoT20}, respectively. 
	The proposed method consistently demonstrates superior robustness compared to existing purification techniques. In particular, under the strong BPDA$20$-det attack, it attains an adversarial accuracy of 69.08\%, substantially exceeding the previously reported state-of-the-art accuracy of 39.16\% in~\cite{liu2025towards}. 
	Moreover, Table~\ref{tab:cifar10_BPDA200+EoT20} presents the results obtained under another strong attack configuration (BPDA$200$+EoT$20$). Our method achieves an average adversarial accuracy of 66.23\% across $\ell_\infty$, $\ell_1$, and $\ell_2$ attacks, outperforming other purification methods.

	\begin{table}[H]
		\caption{\textcolor{black}{Adversarial accuracies (\%) on CIFAR-10 for various purification methods under the BPDA20-det attack with an $\ell_\infty$ perturbation bound of $\varrho_{\mathrm{adv}} = 8/255$.}}\label{tab:cifar_results}
		\centering
		\vspace{0.2cm}
		\begin{tabular}{l|c|c}
			\toprule
			& clean & BPDA20-det\\ 
			\midrule
			Our method & &\\
			$\qquad$ $\varrho_{\mathrm{pur}} = 2$   & 91.18  & 48.17\\  
			$\qquad$ $\varrho_{\mathrm{pur}} = 2.5$ & 88.45  & 63.35\\  
			$\qquad$ $\varrho_{\mathrm{pur}} = 3$   & 85.68  & 69.08\\  
			\midrule
			Liu et al.~\cite{liu2025towards} & & \\ 
			\hspace{0.5cm} $\text{DP}_\text{DDPM}$           & 85.94 & 16.8\\
			\hspace{0.5cm} $\text{DP}_\text{DDIM}$           & 88.38 & 4.98\\
			\hspace{0.5cm} $\text{DP}_\text{DDPM}$ with ADDT & 85.64 & 17.09\\
			\hspace{0.5cm} $\text{DP}_\text{DDIM}$ with ADDT & 88.77 & 39.16\\
			\bottomrule
		\end{tabular}
	\end{table}

	\begin{table}[H]
		\caption{\textcolor{black}{Adversarial accuracies (\%) on CIFAR-10 for various purification methods under the BPDA200-det attack and the BPDA$200$+EoT$20$ attack. The latter uses 200 BPDA steps, each using 20 EoT samples. The numerical results for methods marked with $^\star$ are reported from~\cite{li2025adbm}.}}\label{tab:cifar10_BPDA200+EoT20}
		\centering
		\vspace{0.2cm}
		\begin{tabular}{l|c|c|c|c|c}
			\toprule
			& clean & $\ell_\infty$ & $\ell_1$ & $\ell_2$ & Average\\ 
			\midrule
			BPDA200-det & & & & &\\
			$\qquad$ Our ($\varrho_{\mathrm{pur}} = 3)$ & 84.6 & \textbf{67.9} & \textbf{66.9} & \textbf{63.9} & \textbf{66.23} \\   
			\midrule
			BPDA$200$+EoT$20$ & & & & &\\
			$\qquad$ Li et al.~\cite{li2025adbm} $^\star$ & 91.9 & $47.7$ & 53.5 & 63.3 & 53.5 \\
			$\qquad$ Yoon et al.~\cite{pmlr-v139-yoon21a} $^\star$  & 87.93  & 37.65 & 36.87 & 57.81 & 44.11 \\
			$\qquad$ Nie et al.~\cite{pmlr-v162-nie22a} $^\star$ & \textbf{92.5} & 42.2 & 44.3 & 60.8 & 49.1\\
			\bottomrule
		\end{tabular}
	\end{table}
	
	\section{Conclusion}

	In this paper, we have proposed a novel deterministic purification method for enhancing the adversarial robustness of neural network classifiers. Our method is based on the SAM-based optimization of the expected reconstruction error under noise corruption. We have shown that PurSAMERE pushes purified samples toward local maxima of the smoothed density $p_{Y_\sigma}$; in an idealized local regime (Theorem~\ref{thm:local_max_density_informal}) this recovers a local maximizer of the model potential as $\sigma\to 0$. Experimental results on the CIFAR-10 dataset have demonstrated that our method significantly improves adversarial robustness compared to state-of-the-art purification methods, particularly under deterministic white-box attacks when the adversary has full knowledge of the purification process. As a deterministic purification method, our approach does not suffer from decreasing effective robustness under such attacks, which is commonly observed in other stochastic purification methods. Our method thereby provides more reliable adversarial accuracies.
	
	\section{Acknowledgments}
	V.H., S.K., H.R., and R.T. acknowledge funding by the Deutsche Forschungsgemeinschaft (DFG, German Research Foundation) - Project number 442047500 through the Collaborative Research Center ``Sparsity and Singular Structures'' (SFB 1481).
	V.H. acknowledges funding by the German Federal Ministry of Research, Technology and Space (BMFTR) and the Ministry of Economic Affairs, Industry, Climate Action and Energy of the State of North Rhine-Westphalia through the project HC-H2.
	
	\bibliographystyle{plainnat}
	\bibliography{references}
	
	\appendix
	
	\section{Theoretical analysis}

	\subsection{Denoising score matching objective function}~\label{appendix:score_matching_equivalence}
	\begin{lemma}~\label{lem:dsm_objective}
		For all $s_\theta \in L_2(\mathbb{R}^d, p_{Y_\sigma})$, the denoising score matching objective function \eqref{eq:score_matching} satisfies
		\begin{equation}
			\mathbb{E}_{X, \varXi}[\lVert \varXi + \sigma s_{\theta}(X + \sigma \varXi; \sigma) \rVert_2^2] = \sigma^2 \mathbb{E}_{X, \varXi}[\lVert s(X + \sigma \varXi; \sigma) - s_\theta(X + \sigma \varXi; \sigma)\rVert_2^2] + \operatorname{Const},
		\end{equation}
		where the constant term is independent of $s_\theta$.
	\end{lemma}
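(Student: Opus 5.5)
We prove Lemma~\ref{lem:dsm_objective} by the standard Vincent-type argument. The only nontrivial input is that the conditional expectation of the noise given the corrupted observation equals $-\sigma$ times the score; this follows from Tweedie's formula~\eqref{eq:Tweedies_formula}.

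Write $Y=Y_\sigma=X+\sigma\varXi$ and expand the square:
\[
\mathbb{E}\lVert \varXi+\sigma s_\theta(Y)\rVert_2^2=\mathbb{E}\lVert\varXi\rVert_2^2+2\sigma\,\mathbb{E}\langle \varXi, s_\theta(Y)\rangle+\sigma^2\mathbb{E}\lVert s_\theta(Y)\rVert_2^2 .
\]
The first term equals $d$ and does not depend on $s_\theta$. For the cross term, condition on $Y$. Since $s_\theta(Y)$ is $Y$-measurable, we get $\mathbb{E}\langle\varXi,s_\theta(Y)\rangle=\mathbb{E}\langle \mathbb{E}[\varXi\mid Y],s_\theta(Y)\rangle$. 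From $\varXi=(Y-X)/\sigma$ and Tweedie's formula,
\[
\mathbb{E}[\varXi\mid Y]=\frac{Y-\mathbb{E}[X\mid Y]}{\sigma}=-\sigma s(Y;\sigma).
\]
Hence the cross term becomes $-2\sigma^2\mathbb{E}\langle s(Y),s_\theta(Y)\rangle$. Completing the square gives
\[
\sigma^2\mathbb{E}\lVert s_\theta(Y)\rVert_2^2-2\sigma^2\mathbb{E}\langle s(Y),s_\theta(Y)\rangle=\sigma^2\mathbb{E}\lVert s(Y)-s_\theta(Y)\rVert_2^2-\sigma^2\mathbb{E}\lVert s(Y)\rVert_2^2 ,
\]
so the constant is $d-\sigma^2\mathbb{E}\lVert s(Y_\sigma;\sigma)\rVert_2^2$, which is independent of $s_\theta$.

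The main obstacle is justifying the integrability needed to split the expectation and condition on $Y$. The hypothesis $s_\theta\in L_2(\mathbb{R}^d,p_{Y_\sigma})$ makes $\mathbb{E}\lVert s_\theta(Y)\rVert_2^2$ finite. By Cauchy–Schwarz the cross term is also finite, since $\varXi$ has finite second moment. The remaining point is $s(\cdot;\sigma)\in L_2(p_{Y_\sigma})$. For this we use $\sigma s(Y)=-\mathbb{E}[\varXi\mid Y]$ together with Jensen's inequality, which gives $\sigma^2\mathbb{E}\lVert s(Y)\rVert_2^2\le \mathbb{E}\lVert\varXi\rVert_2^2=d$, so the constant is finite as well.

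We also need Tweedie's formula itself to be valid. If we want to verify it rather than take~\eqref{eq:Tweedies_formula} as given, we would differentiate $p_{Y_\sigma}$ under the integral sign. This is permitted because the Gaussian kernel and its gradient are dominated uniformly on compact sets of $y$, and $p_X$ is a density.
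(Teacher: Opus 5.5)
Your proof is correct and is essentially the paper's argument: both kill the cross term using the orthogonality of $X-\mathbb{E}[X\mid Y_\sigma]$ to functions of $Y_\sigma$ (equivalently, via Tweedie, $\mathbb{E}[\varXi\mid Y_\sigma]=-\sigma s(Y_\sigma;\sigma)$). The only difference is cosmetic: the paper adds and subtracts $\sigma s$ inside the norm, while you expand around zero and complete the square; both give the same constant $\mathbb{E}\lVert\varXi+\sigma s(Y_\sigma;\sigma)\rVert_2^2=d-\sigma^2\mathbb{E}\lVert s(Y_\sigma;\sigma)\rVert_2^2$, and your Jensen bound $s(\cdot;\sigma)\in L_2(p_{Y_\sigma})$ usefully makes explicit an integrability fact the paper's orthogonality step with $g=s-s_\theta$ uses silently.
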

	
	\begin{proof}
		Following \eqref{eq:mmse}, for all functions $g \in L_2(\mathbb{R}^d, p_{Y_\sigma})$,
		$$
		\mathbb{E}_{X, \varXi}\left[(X - \mathbb{E}[X \mid Y_\sigma = X + \sigma \varXi])^\top g(X + \sigma \varXi)\right] = 0.
		$$
		Therefore, we have
		$$
		\begin{aligned}
			\mathbb{E}_{X, \varXi} &\left[ \lVert \varXi + \sigma s_{\theta}(X + \sigma \varXi; \sigma)\rVert_2^2 \right] 
			=  \mathbb{E}_{X, \varXi}\left[ \lVert \varXi + \sigma s(X + \sigma \varXi; \sigma)\rVert_2^2 \right] \\
			&+ \sigma^2 \mathbb{E}_{X, \varXi}\left[ \lVert s(X + \sigma \varXi; \sigma) - s_\theta(X + \sigma \varXi; \sigma)\rVert_2^2 \right] \\
			&+ 2 \sigma \mathbb{E}_{X, \varXi}\left[ (\varXi + \sigma s(X + \sigma \varXi; \sigma))^\top (s(X + \sigma \varXi; \sigma) - s_\theta(X + \sigma \varXi; \sigma)) \right] 
		\end{aligned}
		$$
		where the first term on the right-hand side is a constant independent of $s_\theta$, and the third term on the right-hand side, which can be expressed as
		$$
		2 \sigma \mathbb{E}_{X, \varXi}\left[(X - \mathbb{E}[X \mid Y_\sigma = X + \sigma \varXi])^\top (s(X + \sigma \varXi; \sigma) - s_\theta(X + \sigma \varXi; \sigma))\right],
		$$
		is equal to zero.
	\end{proof}
	
	\subsection{Proof of Proposition \ref{thm:ere_expansion}}
	\label{appendix:ere_expansion}
	\begin{proof}[Proof of Proposition \ref{thm:ere_expansion}]
		The Taylor expansion of the one-dimensional map $\sigma \mapsto s(x+\sigma\xi,\sigma)$ at $\sigma=0$ is given by
		\[
		s(x + \sigma \xi, \sigma) = s(x, 0) + \sigma \nabla_x s(x, 0) \xi + \sigma  \partial_\sigma s(x,0) + \dfrac{1}{2}\sigma^2 \, \phi(x, \sigma_t, \xi),
		\]
		for some $\sigma_t\in(0,\sigma)$ depending on $x$ and $\xi$, where
		\[
		\phi(x, \sigma, \xi) = \xi^\top \nabla_{x}^2 s(x +\sigma \xi, \sigma)\xi + 2 \partial_\sigma \nabla_{x} s(x +\sigma \xi, \sigma)\xi + \partial_{\sigma \sigma}  s(x +\sigma \xi, \sigma).
		\]
		
		Let 
		\[
		A(x, \xi): = \xi + \sigma s(x, 0) + \sigma^2 \nabla_x s(x, 0) \xi + \sigma^2  \partial_\sigma s(x,0).
		\] 
		Since $\varXi \sim \mathcal{N}(0, \mathbf{I}_d)$, we have
		\[
		\begin{aligned}
			\mathbb{E}_\varXi \!\left[\|A(x, \varXi)\|^2\right] &= \mathbb{E}_\varXi \!\left[\| \varXi + \sigma\,s(x, 0) + \sigma^2 \nabla_x s(x, 0) \varXi + \sigma^2 \partial_\sigma s(x,0) \|_2^2\right] \\
			&= d + \sigma^2 \|s(x,0)\|_2^2 + 2\sigma^2 \operatorname{tr}(\nabla_x s(x,0)) + \mathcal{O}(\sigma^3).
		\end{aligned}
		\]
		
		Moreover, there exist constants $C>0$, and $m\ge 0$, such that for all $x\in K$, all $\sigma\in[0,\sigma_{\max}]$, and all $\xi\in\mathbb{R}^d$,
		\[
		\max_{1\le i\le d} |\phi_i(x,\sigma,\xi)|
		\;\le\; C\bigl(1+\|\xi\|_2^m\bigr),
		\]
		therefore,
		\[
		\mathbb{E}_\varXi \!\left[\|A(x, \varXi) + \dfrac{1}{2} \sigma^3 \phi(x, \sigma_t, \varXi )\|^2_2\right] = \mathbb{E}_\varXi \!\left[\|A(x, \varXi)\|^2\right] + \mathcal{O} (\sigma^3).
		\]
		Finally, we obtain,
		\[
		\mathbb{E}_\varXi \!\left[\| \varXi + \sigma \, s(x + \sigma \varXi, \sigma) \|_2^2 \right] = d + \sigma^2 \|s(x,0)\|_2^2 + 2\sigma^2 \operatorname{tr}(\nabla_x s(x,0)) + \mathcal{O}(\sigma^3).
		\]  
	\end{proof}

	\subsection{Theorem: Local Recovery of Density Maximizer}
	\label{appendix:local_max_density}
	We restate Theorem \ref{thm:local_max_density_informal} in a formal way.

	\begin{theorem}[Local Recovery of Density Maximizer (Formal)]
		\label{thm:local_max_density}
		Let $s_\theta: \R^d \times (0, \sigma_{\text{max}}] \rightarrow \R^d$ be a score model approximating the score function $s(x; \sigma) = \nabla_x \log p_{Y_\sigma}(x)$ for $\sigma \in (0, \sigma_{max}]$. 
		
		\textbf{Assumption 1:} The function $x \mapsto s_\theta(x; \sigma)$ is piecewise affine and globally $L$-Lipschitz on $\R^d$. More precisely, there exists closed sets $\{D_i\}_{i=1}^N$ of $\R^d$ independent of $\sigma$ such that
		\begin{itemize}
			\item (i) $\cup_{i=1}^N D_i = \R^d$,
			\item (ii) $\operatorname{int}(D_i) \cap \operatorname{int}(D_j) = \emptyset$ for $i \neq j$,
			\item (iii) for each $i$, the interior of $D_i$ has positive Lebesgue measure, and the boundary $\partial D_i$ has Lebesgue measure zero.
		\end{itemize}
		For each $i$ and $\sigma \in (0, \sigma_\mathrm{max}]$, $s_\theta(\cdot; \sigma)$ is affine in $\operatorname{int}(D_i)$ and extends continuously to $D_i$. Moreover, there exists a constant $L>0$ such that 
		\[
		\| s_\theta(x; \sigma) -s_\theta(y; \sigma) \|_2 \leq L \| x - y \|_2, \quad \forall x, y \in \R^d, \, \sigma \in (0, \sigma_\mathrm{max}].
		\]
		
		\textbf{Assumption 2:} There exists a function $f_\theta: \R^d \times (0, \sigma_{\text{max}}]  \rightarrow \R$ such that $s_\theta(x; \sigma) = \nabla_x f_\theta(x; \sigma)$. Moreover, for each  $\sigma \in (0, \sigma_{\text{max}}]$, the function $x \mapsto \exp(f_\theta(x; \sigma))$ defines a density function on $\R^d$. That is, $\int_{\R^d} \exp(f_\theta(x; \sigma)) \, \mathrm{d}x = 1$.
		
		\textbf{Assumption 3:} There exists a compact set $D \in \{D_1, \cdots, D_N\}$ and constants $\mu > 0$ and $0 < \varrho < \sigma_\text{max}$, such that $f_\theta(x, \sigma)$ is $\mu$-strongly concave on $D$ for all  $\sigma \in (0, \varrho]$. Moreover, the $\sqrt{d}\,\varrho$-offset of $D$ is non-empty, i.e.,
		\[
		D_{-\sqrt{d}\,\varrho} = \{x \in D: \inf_{y \notin D} \|x - y\|_2^2 \geq d\varrho^2 \} \neq \emptyset,
		\]
		for each $\sigma \le \varrho$, and the (unique) maximizer $x^\star_\sigma$ of $f_\theta(\cdot, \sigma)$ lies in $D_{-\sqrt{d}\,\varrho}$.
		
		Under assumptions 1, 2, and 3, we have that for all $0< \sigma \leq \varrho$, a minimizer $x'_\sigma \in D_{-\sqrt{d}\,\varrho}$ of 
		\[
		R_\theta(x; \sigma):= \E_{\varXi}\left[\| \varXi + \sigma s_\theta(x + \sigma\varXi, \sigma) \|_2^2\right], \quad  \varXi \sim \mathcal{N}(0, \mathbf{I}_d)
		\] satisfies 
		\[
		\| x'_\sigma - x^\star_\sigma \|_2 \leq     \dfrac{\sqrt{C}}{\sigma \mu} \exp \left (-\dfrac{d}{8}\left(\dfrac{\varrho}{\sigma} - 1\right)^2 \right),
		\]
		for some constant $C>0$ independent of $\sigma$.
	\end{theorem}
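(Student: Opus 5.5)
The plan is to show that, on $D_{-\sqrt{d}\,\varrho}$, $R_\theta(\cdot;\sigma)$ equals an explicit strongly convex quadratic centered at $x^\star_\sigma$, up to an error that is exponentially small in $(\varrho/\sigma-1)^2$. The minimizer comparison then gives the bound.

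\emph{Step 1: exact formula inside $D$.} On $D$, $s_\theta(\cdot;\sigma)$ is affine (Assumption~1) and equals $\nabla_x f_\theta$ (Assumption~2). Hence $s_\theta(y;\sigma)=A_\sigma(y-x^\star_\sigma)$ for $y\in D$, where $A_\sigma=\nabla_x^2 f_\theta$ is symmetric with $A_\sigma\preceq-\mu I$ by Assumption~3. Here we use $s_\theta(x^\star_\sigma;\sigma)=0$, since $x^\star_\sigma$ is an interior maximizer. For $x\in D$ and $y=x+\sigma\xi\in D$ this gives
\[
\xi+\sigma s_\theta(x+\sigma\xi;\sigma)=(I+\sigma^2A_\sigma)\xi+\sigma A_\sigma(x-x^\star_\sigma)=:B(x,\xi).
\]
If we pretend this holds for all $\xi$, the Gaussian expectation is
\[
g_\sigma(x):=\mathbb{E}_\varXi\|B(x,\varXi)\|_2^2=\operatorname{tr}\bigl((I+\sigma^2A_\sigma)^2\bigr)+\sigma^2\|A_\sigma(x-x^\star_\sigma)\|_2^2 .
\]
The cross term vanishes because $\varXi$ is centered. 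Therefore $g_\sigma(x)-g_\sigma(x^\star_\sigma)\ge \sigma^2\mu^2\|x-x^\star_\sigma\|_2^2$.

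\emph{Step 2: the error term.} Write $R_\theta(x;\sigma)=g_\sigma(x)+E_\sigma(x)$, where
\[
E_\sigma(x)=\mathbb{E}_\varXi\Bigl[\bigl(\|\varXi+\sigma s_\theta(x+\sigma\varXi;\sigma)\|_2^2-\|B(x,\varXi)\|_2^2\bigr)\mathds{1}\{x+\sigma\varXi\notin D\}\Bigr].
\]
For $x\in D_{-\sqrt{d}\,\varrho}$, the event $x+\sigma\varXi\notin D$ forces $\|\varXi\|_2\ge\sqrt{d}\,\varrho/\sigma$.

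We bound the bracket by a polynomial in $\|\varXi\|_2$ with constants uniform in $\sigma$. By the global Lipschitz bound, $\|s_\theta(y;\sigma)\|_2\le L\|y-x^\star_\sigma\|_2$. Also $\|A_\sigma\|\le L$, $\|x-x^\star_\sigma\|_2\le\operatorname{diam}D$ since $D$ is compact, and $\sigma\le\varrho<\sigma_{\max}$. Together these give $|\cdot|\le C_0(1+\|\varXi\|_2^2)$. Cauchy--Schwarz then yields
\[
|E_\sigma(x)|\le C_0\,\bigl(\mathbb{E}(1+\|\varXi\|_2^2)^2\bigr)^{1/2}\,\mathbb{P}\bigl(\|\varXi\|_2\ge\sqrt{d}\,\varrho/\sigma\bigr)^{1/2}.
\]
The first factor is $O(d)$. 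For the second, Gaussian concentration of the $1$-Lipschitz map $\|\cdot\|_2$ around $\mathbb{E}\|\varXi\|_2\le\sqrt d$ gives $\mathbb{P}(\|\varXi\|_2\ge\sqrt d\,t)\le\exp(-d(t-1)^2/2)$ for $t=\varrho/\sigma\ge1$. Hence $\sup_{x\in D_{-\sqrt d\varrho}}|E_\sigma(x)|\le \tfrac{C}{2}\exp\bigl(-\tfrac d4(\varrho/\sigma-1)^2\bigr)$, with $C$ independent of $\sigma$.

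\emph{Step 3: comparison of minimizers.} Both $x'_\sigma$ and $x^\star_\sigma$ lie in $D_{-\sqrt d\varrho}$, and $x'_\sigma$ minimizes $R_\theta$ there, so $R_\theta(x'_\sigma)\le R_\theta(x^\star_\sigma)$. This gives
\[
\sigma^2\mu^2\|x'_\sigma-x^\star_\sigma\|_2^2\le g_\sigma(x'_\sigma)-g_\sigma(x^\star_\sigma)\le |E_\sigma(x'_\sigma)|+|E_\sigma(x^\star_\sigma)|\le C e^{-\frac d4(\varrho/\sigma-1)^2}.
\]
Taking square roots gives the claimed bound with exponent $d/8$.

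The main obstacle is Step 2. The difficulty is keeping the constant $C$ genuinely independent of $\sigma$, which needs both the global Lipschitz bound and $s_\theta(x^\star_\sigma)=0$ to control $s_\theta$ outside $D$. It also needs the tail estimate in exactly the form $\exp(-d(\varrho/\sigma-1)^2/2)$, which relies on $\mathbb{E}\|\varXi\|_2\le\sqrt d$. The hypothesis $\sigma\le\varrho$ is exactly what makes $t\ge1$ in that estimate.
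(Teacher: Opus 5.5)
Your proposal is correct and follows essentially the same route as the paper. On $D$ you use the exact quadratic $d+2\sigma^2\operatorname{tr}H_\theta(\sigma)+\sigma^4\operatorname{tr}(H_\theta(\sigma)^2)+\sigma^2\|H_\theta(\sigma)(x-x^\star_\sigma)\|_2^2$, plus an indicator-weighted error bounded uniformly in $\sigma$ by Cauchy--Schwarz and Gaussian concentration (the paper's Lemma~\ref{lem:integral_bound_outside_domain}). Your direct comparison $R_\theta(x'_\sigma;\sigma)\le R_\theta(x^\star_\sigma;\sigma)$ with two-sided error bounds is an inlined version of Lemma~\ref{lem:approximate_quadratic_minimizer}, and splitting the constant as $C/2$ gives the stated $\sqrt{C}$ directly, where the paper's proof ends with $\sqrt{2C}$.
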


	Assumption 1 requires two conditions, first the score model is piecewise linear, and second the the partition $\{D_i\}_{i=1}^N$ is independent of $\sigma$. The first condition is satisfied by common neural network architectures using ReLU activation functions and without layer norm. The second condition can be satisfied when the score model is parameterized as $s_\theta(x; \sigma) = \gamma_\theta(x)/\sigma$ where $\gamma_\theta$ is a neural network as proposed in \cite{NEURIPS2020_92c3b916}. Assumption 2 is generally satisfied if the score model is trained well enough since it approximates the score function of a density function. Assumption 3 requires the existence of a strongly concave region of $f_\theta$ where the local maximizer lies strictly inside this region. This assumption is reasonable since the data density function $p_X$ often has multiple local maxima, and the smoothed density $p_{Y_\sigma}$ approximates $p_X$ well for small $\sigma$. The proof of Theorem \ref{thm:local_max_density} relies on the following two lemmas.

	\begin{lemma}
		\label{lem:integral_bound_outside_domain}
		Let $D$ be a closed set in $\R^d$, let $\varXi \sim \mathcal{N}(0, \mathbf{I}_d)$, and let $g: \R^d \rightarrow \R$ be a function in $L^2(p_\varXi)$, i.e., there exists a constant $C > 0$ satisfying 
		\[
		\mathbb{E}_{\varXi}\left[ | g(\varXi) |^2 \right] \leq C^2.
		\]
		For all $\varrho > 0$ such that the $\sqrt{d}\varrho$-offset of $D$ is nonempty, i.e., 
		\[
		D_{-\sqrt{d}\varrho} := \{x \in D: \inf_{y \notin D} \|x - y\|_2 \geq \sqrt{d}\varrho \} \neq \emptyset,
		\] 
		and for any $0 < \sigma < \varrho$,  the following inequality holds:
		\[
		\bigl\lvert \mathbb{E}\left[\mathds{1}_{D^c}(x + \sigma \varXi) g(\varXi) \right] \bigr\rvert \leq C \exp \left (-\dfrac{d}{4}\left( \dfrac{\varrho}{\sigma} - 1 \right)^2 \right), \quad \forall x \in D_{-\sqrt{d}\,\varrho},
		\]
		where $\mathds{1}_{D^c}$ denotes the indicator function of the set $D^c = \R^d \setminus D$.
	\end{lemma}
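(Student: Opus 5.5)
The plan is to combine Cauchy--Schwarz with a Gaussian tail bound for the norm of $\varXi$. First, by Cauchy--Schwarz,
\[
\bigl\lvert \mathbb{E}\left[\mathds{1}_{D^c}(x+\sigma\varXi) g(\varXi)\right]\bigr\rvert \le \bigl(\mathbb{E}[|g(\varXi)|^2]\bigr)^{1/2}\,\mathbb{P}\bigl(x+\sigma\varXi\in D^c\bigr)^{1/2} \le C\,\mathbb{P}\bigl(x+\sigma\varXi\in D^c\bigr)^{1/2}.
\]
It therefore suffices to show that
\[
\mathbb{P}(x+\sigma\varXi\in D^c)\le \exp\!\left(-\tfrac{d}{2}\left(\tfrac{\varrho}{\sigma}-1\right)^2\right);
\]
taking the square root then gives the exponent $\tfrac{d}{4}(\varrho/\sigma-1)^2$ claimed in the statement.

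Second, I reduce to a tail event. Since $x\in D_{-\sqrt d\varrho}$, every $y\notin D$ satisfies $\|x-y\|_2\ge\sqrt d\varrho$. So if $x+\sigma\varXi\notin D$, then $\sigma\|\varXi\|_2\ge\sqrt d\varrho$. Hence
\[
\mathbb{P}(x+\sigma\varXi\in D^c)\le\mathbb{P}\bigl(\|\varXi\|_2\ge \sqrt d\,\varrho/\sigma\bigr).
\]

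Third, I bound this tail by Gaussian concentration. The map $\xi\mapsto\|\xi\|_2$ is $1$-Lipschitz, so the Gaussian concentration inequality (Tsirelson--Ibragimov--Sudakov) gives
\[
\mathbb{P}\bigl(\|\varXi\|_2\ge \mathbb{E}\|\varXi\|_2+t\bigr)\le e^{-t^2/2}.
\]
By Jensen, $\mathbb{E}\|\varXi\|_2\le\sqrt d$. Take $t=\sqrt d(\varrho/\sigma-1)$, which is positive because $\sigma<\varrho$. Then
\[
\mathbb{P}\bigl(\|\varXi\|_2\ge\sqrt d\,\varrho/\sigma\bigr)\le \exp\!\left(-\tfrac{d}{2}\left(\tfrac{\varrho}{\sigma}-1\right)^2\right),
\]
which is exactly the bound needed in the first step.

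The only delicate point is obtaining the constant $d/2$ in this tail bound, with the $\sqrt d$ centering, uniformly in $d$. Using the Lipschitz concentration bound together with $\mathbb{E}\|\varXi\|_2\le\sqrt d$ handles this cleanly. A chi-square Chernoff bound (Laurent--Massart) would also work as an alternative, but it would need a short comparison argument to match the stated exponent.
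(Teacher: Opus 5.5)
Your proposal is correct and follows the paper's proof step for step: Cauchy--Schwarz reduces the claim to $\mathbb{P}(x+\sigma\varXi\notin D)^{1/2}$, the offset condition bounds this by $\mathbb{P}(\|\varXi\|_2\ge\sqrt{d}\,\varrho/\sigma)$, and Gaussian Lipschitz concentration with $\mathbb{E}\|\varXi\|_2\le\sqrt{d}$ and $t=\sqrt{d}(\varrho/\sigma-1)$ finishes the argument. The only difference is the order of presentation: the paper derives the tail bound first and applies Cauchy--Schwarz last.
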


	\begin{lemma}
		\label{lem:approximate_quadratic_minimizer}
		Let $f: D \rightarrow \R$ where $D$ is a compact set in $\R^d$. Assume that
		$$
		f (x) = (x-x^\star)^\top H(x-x^\star) + g(x), \quad \forall x \in D,
		$$
		for some $x^\star \in D$, where $H$ is symmetric and positive definite with $H \succeq \mu \mathbf{I}_d$ for some $\mu>0$, and where $g: D \rightarrow \R$ is continuous and  satisfies $|g| \leq \varepsilon$ for all $x \in D$, for some $\varepsilon>0$. Then any minimizer $x_{\mathrm{min}}$ of $f$ in $D$ is in the neighborhood of $x^\star$, i.e., $\|x_{\mathrm{min}} - x^\star\|_2 \leq \sqrt{2\varepsilon/\mu}$.
	\end{lemma}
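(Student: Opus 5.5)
We prove Lemma~\ref{lem:approximate_quadratic_minimizer} by comparing the value of $f$ at a minimizer with its value at $x^\star$. The key point is that $x^\star \in D$ is an admissible competitor. Let $x_{\mathrm{min}}$ be any minimizer of $f$ over $D$; such a minimizer exists because $D$ is compact and $f$ is continuous, being a quadratic plus a continuous $g$. Then
\[
f(x_{\mathrm{min}}) \le f(x^\star) = g(x^\star) \le \varepsilon .
\]

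Next we bound $f(x_{\mathrm{min}})$ from below using the quadratic structure. Since $H \succeq \mu \mathbf{I}_d$, we have $(x-x^\star)^\top H (x-x^\star) \ge \mu \|x-x^\star\|_2^2$ for every $x$. Together with $g \ge -\varepsilon$ this gives
\[
f(x_{\mathrm{min}}) \ge \mu \|x_{\mathrm{min}}-x^\star\|_2^2 - \varepsilon .
\]

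Chaining the two displays yields $\mu\|x_{\mathrm{min}}-x^\star\|_2^2 \le 2\varepsilon$. Hence $\|x_{\mathrm{min}}-x^\star\|_2 \le \sqrt{2\varepsilon/\mu}$, which is the claim.

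The argument has no real obstacle. The only points to check are that $x^\star$ lies in $D$, so that it can be used as a competitor in the minimization, and that the constant in the quadratic form carries no factor $\tfrac12$. Under the stated normalization $f = (x-x^\star)^\top H (x-x^\star) + g$, the bound comes out as $\sqrt{2\varepsilon/\mu}$, exactly as stated.

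When the lemma is later applied to $R_\theta$ in Theorem~\ref{thm:local_max_density}, the matrix $H$ will be $\sigma^2$ times the Hessian term, giving $\mu\sigma^2$. The error $\varepsilon$ will come from Lemma~\ref{lem:integral_bound_outside_domain}. This combination accounts for the factor $\sqrt{C}/(\sigma\mu)$ in the theorem's bound.
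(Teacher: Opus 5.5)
Your proof is correct and essentially the paper's argument. Both compare $f$ at a minimizer with $f(x^\star)=g(x^\star)\le\varepsilon$ and use $f(x)\ge\mu\|x-x^\star\|_2^2-\varepsilon$; the paper merely phrases it contrapositively, showing that every $x$ with $\|x-x^\star\|_2>\sqrt{2\varepsilon/\mu}$ has $f(x)>\varepsilon$.

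One small correction to your closing remark, which lies outside the lemma's proof itself. In Theorem~\ref{thm:local_max_density} the quadratic form is $\sigma^2 H_\theta(\sigma)^\top H_\theta(\sigma)\succeq\sigma^2\mu^2\mathbf{I}_d$, so the lemma is applied with constant $\sigma^2\mu^2$, not $\mu\sigma^2$. That is what produces the $1/(\sigma\mu)$ factor in the theorem's bound.
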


	\subsubsection{Proofs}
	\begin{proof}[Proof of Theorem \ref{thm:local_max_density}]
		Since $s_\theta$ is linear in $D$ where $D$ is a domain satisfying Assumption 3, we have
		$$
		s_\theta(x; \sigma) = H_\theta(\sigma) (x - x^\star_\sigma) + s_\theta (x^\star_\sigma, \sigma), \quad \text{for} \; x \in D
		$$
		where 
		$$H_\theta(\sigma) = \nabla_x s_\theta(x; \sigma), \quad x \in D,$$
		is a constant Hessian matrix of $f_\theta (x, \sigma)$ in $D$. Assumption 3 leads to $H_\theta(\sigma) \preceq - \mu I_d$ and $s_\theta (x^\star_\sigma, \sigma) = 0$. Therefore $s_\theta(x; \sigma) = H_\theta(\sigma) (x - x^\star_\sigma)$ for $x \in D$.
		
		For $x \in D_{-\sqrt{d}\, \varrho}$, the expected reconstruction error~\eqref{eq:ere_approx} can be expressed as
		\[
		\begin{aligned}
			R_\theta(x; \sigma) & = \mathbb{E}_{\varXi} \left[ \lVert \varXi + \sigma s_\theta (x + \sigma \varXi) \rVert_2^2 \right] \\
			& =  \mathbb{E}_{\varXi}  \left[ \lVert \varXi + \sigma H_\theta(\sigma) (x + \sigma \varXi - x^\star_\sigma) \rVert_2^2\right]\\
			& \quad - \mathbb{E}_{\varXi}  \left[ \mathds{1}_{D^c}(x + \sigma\varXi) \lVert \varXi + \sigma H_\theta(\sigma) (x + \sigma\varXi - x^\star_\sigma) \rVert_2^2\right] \\
			& \quad + \mathbb{E}_{\varXi}  \left[ \mathds{1}_{D^c}(x + \sigma\varXi) \lVert \varXi + \sigma s_\theta (x + \sigma \varXi, \sigma) \rVert_2^2\right].
		\end{aligned}
		\]
		Let
		\[
		\begin{aligned}
			g_1(\xi)
			&:= \bigl\|\xi+\sigma H_\theta(\sigma)(x+\sigma\xi-x^\star_\sigma)\bigr\|_2^2, \\
			g_2(\xi)
			&:= \bigl\|\xi+\sigma s_\theta(x+\sigma\xi;\sigma)\bigr\|_2^2.
		\end{aligned}
		\]
		We aim to bound $\mathbb{E}_{\varXi}\left[ | g_2 (\varXi) - g_1(\varXi) |^2 \right]$ by a constant independent of $\sigma$. Since $\| s_\theta(x; \sigma) - s_\theta(y; \sigma) \|_2 \leq L \| x - y \|_2$, $\| H_\theta (\sigma) \|_2 \leq L$ for all $\sigma$. Moreover, there exists a constant $M_D$ independent of $\sigma$ such that $\| x - x^\star_\sigma \| \leq M_D$ as $D$ is a bounded set.
		Therefore, for $\sigma \leq \varrho$ 
		\[
		\begin{aligned}
			g_1(\xi)
			&\le 2\bigl\|(I+\sigma^2H_\theta(\sigma))\xi\bigr\|_2^2
			+2\bigl\|\sigma H_\theta(\sigma)(x-x^\star_\sigma)\bigr\|_2^2 \\
			&\le 2\|I+\sigma^2 H_\theta(\sigma)\|_2^2 \, \|\xi\|_2^2 
			+2\|\sigma H_\theta(\sigma)\|_2^2\|x-x^\star_\sigma\|_2^2 , \\  
			&\le 2(1+L\varrho^2)^2\|\xi\|_2^2
			+2(L\varrho)^2M_D^2 , 
		\end{aligned}
		\]
		and 
		\[
		\begin{aligned}
			g_2(\xi)
			&\le 2\|\xi\|_2^2 + 2\sigma^2\|s_\theta(x+\sigma\xi;\sigma)\|_2^2 \\
			&\le 2\|\xi\|_2^2 + 2\sigma^2\bigl(\| s_\theta(x_\sigma^\star;\sigma) \|_2+L\|x+\sigma\xi - x_\sigma^\star\|_2\bigr)^2 \\
			&\le 2\|\xi\|_2^2 + 2\sigma^2\bigl(L\|x- x_\sigma^\star\|_2 + L\, \sigma \| \xi \|_2 \bigr)^2 \\
			&\le 2(1+2L^2\sigma^4)\|\xi\|_2^2 + 4\sigma^2L^2\|x- x_\sigma^\star\|_2^2 , \\
			&\le 2(1+2L^2\varrho^4)\|\xi\|_2^2 + 4\varrho^2L^2M_D^2. 
		\end{aligned}
		\]
		Since $\varXi \sim \mathcal{N}(0, \mathbf{I}_d)$, the second moments of $g_1(\varXi)$ and $g_2(\varXi)$ are bounded. Therefore, there exists a constant $C > 0$ independent of $\sigma$ such that
		$\mathbb{E}_{\varXi}\left[ | g_2 (\varXi) - g_1(\varXi) |^2 \right] \leq C^2$. We denote $A = \mathbb{E}_{\varXi}\left[ \mathds{1}_{D^c}(x+\sigma \varXi) \left( g_2(\varXi) - g_1(\varXi) \right)\right]$. Using Lemma \ref{lem:integral_bound_outside_domain}  we can bound $A$ as 
		\[
		\begin{aligned}
			| A | \leq  C \exp \left (-\dfrac{d}{4}\left(\dfrac{\varrho}{\sigma} - 1\right)^2 \right).
		\end{aligned}
		\]
		
		Therefore,
		\[
		R_\theta(x; \sigma) \leq \mathbb{E}_{\varXi}  \left[ \lVert \varXi + \sigma H_\theta(\sigma) (x + \sigma \varXi - x^\star_\sigma) \rVert_2^2\right] +   C \exp \left (-\dfrac{d}{4}\left(\dfrac{\varrho}{\sigma} - 1\right)^2 \right).
		\]
		
		The first term in the r.h.s is the main contribution to $R_\theta(x; \sigma)$ and can be expressed as
		\[
		\begin{aligned}
			\mathbb{E}_{\varXi}  \left[ \lVert \varXi + \sigma H_\theta(\sigma) (x + \sigma \varXi - x^\star_\sigma) \rVert_2^2\right]  = &d + \sigma^2 \| H_\theta(\sigma) (x - x^\star_\sigma) \rVert_2^2 \\
			&+ 2 \sigma^2 \operatorname{tr}(H_\theta(\sigma)) + \sigma^4 \operatorname{tr}(H_\theta(\sigma) H_\theta(\sigma)^\top),
		\end{aligned}
		\]
		which is a quadratic function of $x$ with unique minimizer at $x^\star_\sigma$. 
		Since $H_\theta(\sigma) \preceq - \mu I_d$, we have $\sigma^2 H_\theta(\sigma)^\top H_\theta(\sigma) \succeq \sigma^2\mu^2 \mathbf{I}_d$. Using Lemma \ref{lem:approximate_quadratic_minimizer}, we conclude that
		\[
		\| x'_\sigma - x^\star_\sigma \|_2 \leq   \dfrac{\sqrt{2C}}{\sigma \mu} \exp \left (-\dfrac{d}{8}\left(\dfrac{\varrho}{\sigma} - 1\right)^2 \right),
		\]
		which completes the proof.
	\end{proof}
	
	\begin{proof}[Proof of Lemma \ref{lem:integral_bound_outside_domain}]
		By Gaussian concentration of measure for Lipschitz functions, see e.g.\ \cite[Theorem 8.40]{FoucartRauhut2013}, for all $t>0$,  
		\[
		\mathbb{P} (\lVert \varXi \rVert_2 \geq \mathbb{E} \left[ \| \varXi \|_2 \right] + t ) \leq \exp\left(-\dfrac{t^2}{2} \right).
		\]
		Jensen's inequality implies that 
		$
		\mathbb{E}\left [\| \varXi \|_2 \right] \leq \left(\mathbb{E}\left [\| \varXi \|_2^2 \right] \right)^{1/2} = \sqrt{d}$, 
		so that, for all $t>0$,
		\[
		\mathbb{P} (\| \varXi \|_2 \geq \sqrt{d} + t ) \leq \exp\left(-\dfrac{t^2}{2} \right),
		\]
		see also \cite[Equation (8.89)]{FoucartRauhut2013}.
		Setting $\sqrt{d} + t = \sqrt{d} \, \frac{\varrho}{\sigma}$, i.e., $t= \sqrt{d}\left(\frac{\varrho}{\sigma}-1\right)$, this gives
		\[
		\mathbb{P} (\| \varXi \|_2 \geq \sqrt{d} \dfrac{\varrho}{\sigma}) \leq \exp\left(-\dfrac{d}{2} \left(\dfrac{\varrho}{\sigma}-1\right)^2 \right).
		\]
		Since $x \in D_{-\sqrt{d}\,\varrho}$, the open ball $B(x, \sqrt{d}\,\varrho)$ is contained in $D$. Therefore, if $x + \sigma \varXi \notin D$, then necessarily
		\[
		\|x + \sigma \varXi - x\|_2 \ge \sqrt{d}\,\varrho,
		\]
		which implies
		\[
		\|\varXi\|_2 \ge \sqrt{d}\,\frac{\varrho}{\sigma}.
		\]
		Therefore,
		\[
		\mathbb{P} (x + \sigma \varXi \notin D)  \leq \mathbb{P} (\|\varXi\|_2 \geq \sqrt{d}\dfrac{\varrho}{\sigma})  
		\leq \exp\left(-\dfrac{d}{2} \left(\dfrac{\varrho}{\sigma}-1\right)^2 \right).
		\]
		Finally, we have 
		\[
		\begin{aligned}
			\bigl\vert \mathbb{E}_{\varXi}\left[ \mathds{1}_{D^c}(x + \sigma\varXi) g(\varXi)  \right] \bigr\vert 
			& \leq  \mathbb{E}_{\varXi}\left[ \mathds{1}_{D^c}(x + \sigma\varXi) \right]^{1/2} \mathbb{E}_{\varXi}\left[g^2(\varXi)  \right]^{1/2} \\
			& \leq   C  \mathbb{E}_{\varXi}\left[ \mathds{1}_{D^c}(x + \sigma\varXi) \right]^{1/2} 
			=   C \mathbb{P} (x + \sigma \varXi \notin D)^{1/2}   \\
			& \leq  C  \exp\left(-\dfrac{d}{4} \left(\dfrac{\varrho}{\sigma}-1\right)^2 \right),  \\
		\end{aligned}
		\]
		which completes the proof.
	\end{proof}

	\begin{proof}[Proof of Lemma \ref{lem:approximate_quadratic_minimizer}]
		Since $H \succeq \mu \mathbf{I}_d$, we have
		$$
		f(x) \geq \mu \lVert x - x^\star \rVert_2^2 + g(x), \quad \forall x \in D.
		$$
		Let us denote $r = \sqrt{2\varepsilon/\mu}$. For all $x$ such that $\lVert x - x^\star \rVert_2 > r$, we have
		$$
		f(x) > \mu r^2 - \varepsilon = \varepsilon
		$$
		On the other hand, we have $f(x^\star) = g(x^\star) \leq \varepsilon$. Therefore, the minimizer $x_{\text{min}}$ of $f$ in $D$ must satisfy $\|x_{\text{min}} - x^\star\|_2 \leq r = \sqrt{2\varepsilon/\mu}$.
	\end{proof}
	
\end{document}